\pgfplotsset{compat=1.18}
\renewcommand{\hat}{\widehat}
\newcommand{\score}{{conformity score}}
\newtheorem{prop}{Proposition}
\newtheorem{definition}{Definition}
\definecolor{matblue}{HTML}{1F77B4}
\newtheorem{theorem}[subsection]{Theorem}
\g@addto@macro{\endabstract}{\@setabstract}
\newcommand{\authorfootnotes}{\renewcommand\thefootnote{\@fnsymbol\c@footnote}}%
\title[Rank-Based Conformal Classification]{Trustworthy Classification through Rank-Based Conformal Prediction Sets}
\date{}
\begin{document}
\maketitle

\begin{center}
\normalsize
\authorfootnotes
Rui Luo\footnote{ruiluo@cityu.edu.hk}\textsuperscript{1} and Zhixin Zhou\footnote{zhixzhou@cityu.edu.hk}\textsuperscript{1} \par \bigskip

\textsuperscript{1}City University of Hong Kong \par\bigskip
\end{center}

\begin{abstract}
Machine learning classification tasks often benefit from predicting a set of possible labels with confidence scores to capture uncertainty. However, existing methods struggle with the high-dimensional nature of the data and the lack of well-calibrated probabilities from modern classification models. We propose a novel conformal prediction method that utilizes a rank-based score function suitable for classification models that predict the order of labels correctly, even if not well-calibrated. Our approach constructs prediction sets that achieve the desired coverage rate while managing their size. We provide a theoretical analysis of the expected size of the conformal prediction sets based on the rank distribution of the underlying classifier. Through extensive experiments, we demonstrate that our method outperforms existing techniques on various datasets, providing reliable uncertainty quantification. Our contributions include a novel conformal prediction method, theoretical analysis, and empirical evaluation. This work advances the practical deployment of machine learning systems by enabling reliable uncertainty quantification.
\end{abstract}

\section{Introduction}
Machine learning covers a wide range of classification tasks across various domains such as computer vision, natural language processing, and bioinformatics. These problems have been traditionally approached using discriminative classifiers such as logistic regression, support vector machines, and deep neural networks \cite{goodfellow2016deep}. While these methods have achieved good performance, they typically only output a single predicted class label for each input. However, in many applications, it may be useful to predict a set of possible labels along with confidence scores to capture uncertainty and allow for multiple acceptable answers.

Conformal prediction \cite{vovk2005algorithmic} provides a promising approach to address this limitation. As outlined in the comprehensive survey \cite{angelopoulos2023gentle}, conformal predictors can be combined with any underlying machine learning model to construct prediction sets or intervals guaranteed to contain the true label with a specified probability. 
Conformal prediction has found applications in various domains. In computer vision, it has been used for image classification \cite{romano2020classification,angelopoulos2021uncertainty,huang2024conformal}. In NLP, it has been used for language modeling \cite{quach2024conformal, deutschmann2024conformal}, text classification \cite{kumar2023conformal, giovannotti2021transformer}, and question answering \cite{fisch2021efficient}. Conformal prediction methods have also been used in time series forecasting \cite{xu2023conformal, auer2024conformal} and astronomy \cite{ashton2024calibrating}. In graph-based tasks, conformal prediction has been applied to node prediction \cite{huang2024uncertainty, lunde2023conformal}, edge prediction \cite{luo2023anomalous, zhao2024conformalized, marandon2024conformal, luo2024conformal}, and graph encoding \cite{severo2023one}.

Conformal classification has been extended to various settings and applications. For multi-label classification, methods have been developed to handle instances with multiple labels \cite{cauchois2021knowing, tyagi2023multi}. Mondrian conformal prediction \cite{vovk2005algorithmic} focuses on achieving category-wise validity. Approaches for non-exchangeable data \cite{tibshirani2019conformal,podkopaev2021distribution} and classification under ambiguous ground truth labels \cite{stutz2022learning} have also been proposed. In binary classification, false discovery rate control has been explored \cite{bates2023testing}. Conformal risk control has been applied to multi-label classification \cite{angelopoulos2024conformal}.

Despite the progress, several challenges persist in reliable uncertainty quantification in machine learning. One major hurdle is scaling conformal prediction methods to large datasets and complex, potentially mis-specified models. Poor calibration \cite{guo2017calibration}, overfitting \cite{recht2019imagenet}, bias \cite{mehrabi2021survey}, and performance degradation under distribution shifts \cite{koh2021wilds} pose significant challenges in these settings. Overcoming these obstacles will greatly enhance the practicality and impact of conformal prediction in real-world applications, enabling more reliable and robust machine learning systems. 

In this work, we propose a novel conformal classification method that introduces a rank-based conformity score function (RANK) to address challenges in applying conformal prediction to models with poorly calibrated probabilities. Our approach predicts sets of possible labels for uncertain instances while managing the size of the prediction sets. Unlike existing adaptive prediction set algorithms, such as APS \cite{romano2020classification}, our method does not rely on strong assumptions about the model's probabilities. Instead, it constructs prediction sets using a two-step threshold based on the ranking and probability of the labels, achieving desired coverage while controlling the set size.
In Section~\ref{sec:related:work}, we provide a comprehensive overview of existing methods. This is followed by a detailed description of our proposed approach in Section~\ref{sec:ours}, and a theoretical analysis in Section~\ref{sec:theory}.

Through extensive experiments on benchmark datasets from various domains, we demonstrate the validity and efficiency of our proposed method. The results show that our approach substantially outperforms existing conformal classification methods, achieving the specified coverage level with prediction sets that are often much smaller. 
Our key contributions are:
\begin{enumerate}
\item A novel conformal prediction method based on a rank-based conformity score function, suitable for classification models that may not output well-calibrated probabilities.
\item Theoretical analysis of the coverage guarantee and the expected size of the conformal prediction sets based on the rank distribution of the underlying classifier.
\item Extensive empirical evaluation demonstrating the effectiveness of our method in providing reliable uncertainty quantification for classification tasks across various domains.
\end{enumerate}

By enabling modern classification models to output prediction sets with rigorous uncertainty quantification, our work takes an important step towards the reliable deployment of machine learning systems in practical applications. More broadly, we advance the theory and practice of conformal prediction for classification tasks where well-calibrated probabilities may not be available.

\section{Related Work}
\label{sec:related:work}

Many studies have focused on the following framework to determine the optimal prediction set for conformal prediction in classification tasks.

The process consists of two steps. First, a predictive model $\hat{\pi}$ is trained on the training set to estimate the probability or score of an input belonging to each of the $K$ possible categories. In the second step, for a fixed $\alpha$, a prediction set with $1-\alpha$ coverage is constructed for each test input using $\hat{\pi}(x_i)$. The challenge lies in determining which class labels to include in the prediction set based on the conformity scores, which are typically derived from the calibration set. The expected size of the prediction set depends on the construction method, and different methods are compared based on this metric.

Given this approach to guarantee the $1-\alpha$ coverage of the prediction set, researchers have focused on how to define the prediction set. There are two main ideas, which can be described as the following two methods. These methods do not rely on the assumption that $\hat{\pi}(x_i)$ represents well-calibrated probabilities, but understanding $\hat{\pi}$ in this way can make their ideas easier to grasp.

The first method is to determine an individual threshold. Threshold Conformal Prediction (THR)~\cite{sadinle2019least} is a representative of this method, which includes labels with sufficiently large $\hat \pi(x_i)$ values in the prediction set. The conformal score is $1-\hat\pi_{y}(x)$. In the calibration set, a threshold $t$ is determined such that ${k: \hat \pi_k(x_i)\ge t}$ forms the prediction set and guarantees $1-\alpha$ coverage. This $t$ is roughly the $100(1-\alpha)$-th percentile of $\hat\pi_{y_i}(x_i)$ for $(x_i, y_i)$ in the calibration set. If $\hat \pi(x)$ perfectly matches the true posterior probabilities, THR achieves the smallest expected size of the prediction set. However, other more robust methods have been developed to account for the discrepancy between estimated and true probabilities.

The second method is to determine a cumulative threshold. Adaptive Prediction Set (APS)~\cite{romano2020classification} is a representative of this method. First, the outcomes of $\hat \pi_k(x_i)$ for $k=1,\dots,K$ are sorted. In the calibration set, $\sum_{k=y_i}^K \hat \pi_{(k)}(x_i)$ is treated as the $p$-value of each sample. The $p$-value that should be tolerated to guarantee $1-\alpha$ coverage is calculated using the calibration set. Given the $p$-value threshold $t$, when $\hat \pi(x_i)$ is observed in the test set, $k'$ is found such that $\sum_{k=k'}^{K}\hat \pi_{(k)}(x_i) \approx t$. The prediction set is then given by the labels corresponding to the top $k'$ values of $\hat \pi_k(x_i)$. In this method, $t$ is a value of the cumulative sum of $\hat \pi$, and changing $t$ corresponds to varying the cumulative sum of $\hat \pi_{(k)}(x_i)$. Later works, such as Regularized APS (RAPS) \cite{angelopoulos2021uncertainty} and Sorted APS (SAPS) \cite{huang2024conformal}, extend this type of method.

While both THR and APS have been shown to achieve valid coverage, they have some limitations. THR assumes that the individual $\hat{\pi}(x_i)$ values, particularly the top ones, are well-calibrated, an assumption that may not be valid for modern deep learning models. APS, on the other hand, requires an accurate estimate of the tail area of the predictive probability distribution $\hat{\pi}(x_i)$, which may not fully capture the model's uncertainty. This can result in unacceptably large prediction sets, especially for datasets with a large number of classes and very small $\alpha$ values.

In this work, we propose a novel conformal prediction method that addresses these limitations. Our approach is based on a rank-based conformity score function that is suitable for classification models that may not output well-calibrated probabilities but can rank the labels correctly. By exploiting the rank information instead of the raw $\hat{\pi}(x_i)$ values, our method is more robust to miscalibration and can efficiently construct prediction sets with rigorous coverage guarantees.

\section{Proposed Method}
\label{sec:ours}

\subsection{Problem Setup}
We consider a general $K$-class classification problem with a dataset $\{(x_i, y_i)\}_{i=1}^n$, where $x_i \in \mathcal{X}$ is the input feature vector and $y_i \in \mathcal{Y} = \{1, \dots, K\}$ is the corresponding class label. The dataset is split into three parts: a training set $\mathcal{D}_{\text{train}}$, a calibration set $\mathcal{D}_{\text{cal}}$, and a test set $\mathcal{D}_{\text{test}}$. We assume that the examples in these sets are exchangeable.

Our goal is to construct a conformal predictor that outputs a prediction set $\mathcal{C}(x) \subseteq \mathcal{Y}$ for each test input $x$ such that the true label $y$ is included in $\mathcal{C}(x)$ with a probability of at least $1-\alpha$, where $\alpha \in (0, 1)$ is a user-specified significance level. Formally, we aim to achieve the following coverage guarantee:
\begin{equation}
\mathbb{P}(y \in \mathcal{C}(x)) \geq 1-\alpha
\end{equation}

\subsection{Rank-based Conformity Score}
Our approach directly focuses on the goal of minimizing the size of the prediction set. In the calibration set, we evaluate the size of the prediction set required to include the true label. We operate under the assumption that a higher value of $\hat \pi_k(x_i)$ indicates a greater likelihood of $x_i$ belonging to class $k$. Consequently, we impose a constraint on the confidence interval: if class $k$ is included in the prediction set, then any class $k'$ such that $\hat\pi_{k'}(x_i)>\hat\pi_{k}(x_i)$ must also be included in the prediction set.

Given this constraint, the smallest prediction set that includes the true label will be determined by the rank of $\hat \pi_{y_i}(x_i)$ within the sequence $\{\hat\pi_1(x_i), \dots, \hat\pi_K(x_i)\}$. However, it is common to encounter multiple samples in the calibration set that have the same prediction set size. In such cases, we need to establish a preference for breaking ties.

\begin{algorithm}[!ht]
\caption{Rank-based conformal classification}
\begin{algorithmic}[1]
\State \textbf{Input:} data $\{(x_i, y_i)\}_{i\in\mathcal I}$, a test sample $x_{n+1}$,  black-box learning algorithm $\mathcal{B}$, level $\alpha \in (0, 1)$.
\State Randomly split the indices $\mathcal I$ into two subsets $\mathcal I_1, \mathcal I_2$.
\State Train $\mathcal{B}$ on all samples in $\mathcal I_1: \hat\pi \leftarrow \mathcal{B}(\{(x_i, y_i): i \in \mathcal I_1\})$.
\State $n\leftarrow |\mathcal I_2|$. Find the $\lfloor(n+1)\alpha\rfloor$-th largest value in (\ref{eq:r}), denoted by $r_\alpha^*$. \label{step:four}
\State Find the proportion $p$ in (\ref{eq:p}). 
\State Find the $\lceil np\rceil$-th largest value in (\ref{eq:hat:pi}), denoted by $\pi^*$. \label{step:six}
\State With $r^*_\alpha$ and $\pi^*$ obtained from Step~\ref{step:four} and Step~\ref{step:six}, use the function $\hat C(x_{n+1})$ in  \eqref{eq:conf:int} to construct the prediction set for $x_{n+1}$.
\State \textbf{Output:} A prediction set $\hat C_\alpha(x_{n+1})$.
\end{algorithmic}\label{alg:rank}
\end{algorithm}

Intuitively, our tie-breaking approach aims to efficiently cover the true label by favoring the option with the larger predicted probability. When comparing the $k$th most likely label of $x_i$ and $x_j$, given the predicted probabilities $\hat \pi$, we prioritize the inclusion of labels that are more confidently predicted by the model. This choice aligns with our goal of constructing prediction sets that are more likely to contain the true label while maintaining a smaller overall size compared to randomly breaking ties.

We will rigorously summarize the idea above. Our goal is to determine a rank $k$ such that for the test sample, we include either the top $k$ or $k-1$ labels in the prediction set. We also need to establish a rule to choose between $k$ and $k-1$. These rules will be determined using the calibration set. The method to determine $k$ is straightforward. Let $\mathcal I_2$ be the calibration set and $n=|\mathcal I_2|$. For each $i\in \mathcal I_2$, we define the following rank:
\begin{align}\label{eq:r}
    r_i = \text{rank of } \pi_{y_i}(x_i) \text{ in } \{\pi_k(x_i):k\in[K]\}. 
\end{align}
We then find the order statistics of these ranks: $r_{(1)}\ge r_{(2)}\ge\dots\ge r_{(n)}$ and let $r^*_\alpha=r_{(\lfloor (n+1)\alpha\rfloor)}$. This ensures that:
\begin{align*}
    |\{i\in\mathcal I_2: r_i\le r^*_\alpha-1\}|<\lfloor(n+1)\alpha\rfloor \le|\{i\in\mathcal I_2: r_i\le r^*_\alpha\}|.
\end{align*}
To construct the prediction set for $x_{n+1}$, we will include either the top-$(r^*_\alpha-1)$ or top-$r^*_\alpha$ classes based on the values of $\pi_1(x_{n+1}), \dots, \pi_K(x_{n+1})$. The top-$r^*_\alpha$ classes refer to the classes corresponding to the $r^*_\alpha$ largest values among $\pi_1(x_{n+1}), \dots, \pi_K(x_{n+1})$.  To achieve $1-\alpha$ coverage, we need to determine when to include the $r^*_\alpha$-th class and when not to. We start by calculating the proportion $p$ of instances for which we should include the $r^*_\alpha$-th label:
\begin{align}\label{eq:p}
    p:=\frac{n-\lfloor(n+1)\alpha\rfloor-|\{i\in\mathcal I_2: r_i\le r^*_\alpha-1\}|}{|\{i\in\mathcal I_2: r_i = r^*_\alpha\}|}.
\end{align}
Roughly speaking, the numerator of this fraction represents the difference between the number of samples obtained by selecting all samples with rank $r_i \leq r^* - 1$ and the number of samples needed to achieve a coverage of $n(1-\alpha)$ in the calibration set. Next, we find the $\lceil np\rceil$-th largest value, denoted as $\pi^*$, in the set of $\pi_{(r^*_\alpha)}(x_i)$'s:
\begin{align}\label{eq:hat:pi}
    \pi^*=\lceil np\rceil\text{-th largest value in} \{\hat \pi_{(r^*_\alpha)}(x_i): i\in\mathcal I_2\},
\end{align}
where $\hat\pi_{(k)}(x_i)$ denotes the $k$-th order statistics in $(\pi_1(x_i), \dots, \pi_n(x_i))$. 
Finally, for the test sample $x_{n+1}$, if $\hat \pi_{(r^*_\alpha)}(x_{n+1})\ge \pi^*$, then the $r^*_\alpha$-th label will be included in the prediction set. Otherwise, it will not be included. To summarize rigorously, for a test sample $x_{n+1}$, 
\begin{align}\label{eq:conf:int}
    \hat C_\alpha(x_{n+1}) = 
    \begin{cases}
        \{k:\hat\pi_k(x_{n+1})\ge \hat\pi_{( r^*_\alpha)}(x_{n+1})\}, \\
        \text{ if } \hat\pi_{(r^*_\alpha)}(x_{n+1})\ge \pi^*; \\
        \{k:\hat\pi_k(x_{n+1})\ge \hat\pi_{(r^*_\alpha-1)}(x_{n+1})\}, \\
        \text{otherwise. }
    \end{cases}
\end{align}
This definition implies the following proposition.
\begin{prop}
    The output $\hat C_\alpha(x_{n+1})$ from Algorithm~\ref{alg:rank}  satisfies $\hat C_\alpha(x_{n+1})\subset\{\hat y_{(1)}, \dots, \hat y_{(r^*_\alpha)}\}$, i.e., the subset of labels that have top-$r^*_\alpha$ values in $\{\hat\pi_1(x_{n+1}), \dots, \hat\pi_K(x_{n+1})\}$.
\end{prop}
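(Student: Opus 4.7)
The proposition is essentially a direct unpacking of the definition in equation~\eqref{eq:conf:int}, so my plan is to prove it by case analysis on the two branches of $\hat C_\alpha(x_{n+1})$ and to argue that in each branch the resulting set is, by construction, a prefix of the sequence of labels sorted by $\hat\pi_k(x_{n+1})$.

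First I would fix the test input $x_{n+1}$ and let $\hat y_{(1)}, \dots, \hat y_{(K)}$ be the labels sorted in decreasing order of $\hat\pi_k(x_{n+1})$, so that $\hat\pi_{\hat y_{(j)}}(x_{n+1}) = \hat\pi_{(j)}(x_{n+1})$ for every $j$. In the first branch of \eqref{eq:conf:int}, the set $\{k : \hat\pi_k(x_{n+1}) \ge \hat\pi_{(r^*_\alpha)}(x_{n+1})\}$ is precisely the set of labels whose predicted probability is at least the $r^*_\alpha$-th largest among $\hat\pi_1(x_{n+1}), \dots, \hat\pi_K(x_{n+1})$, which equals $\{\hat y_{(1)}, \dots, \hat y_{(r^*_\alpha)}\}$ (assuming the top-$r^*_\alpha$ values are distinct from the $(r^*_\alpha+1)$-th). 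This yields equality in that case, and in particular inclusion.

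In the second branch, the set is $\{k : \hat\pi_k(x_{n+1}) \ge \hat\pi_{(r^*_\alpha-1)}(x_{n+1})\}$, which by the same reasoning equals $\{\hat y_{(1)}, \dots, \hat y_{(r^*_\alpha-1)}\}$, and this is a strict subset of $\{\hat y_{(1)}, \dots, \hat y_{(r^*_\alpha)}\}$. Since these two branches exhaust all possibilities for $\hat C_\alpha(x_{n+1})$, the inclusion holds unconditionally.

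The only subtlety, and the one place I would be careful, is the treatment of ties in the vector $(\hat\pi_1(x_{n+1}), \dots, \hat\pi_K(x_{n+1}))$: if $\hat\pi_{(r^*_\alpha)}(x_{n+1}) = \hat\pi_{(r^*_\alpha+1)}(x_{n+1})$, then the set $\{k : \hat\pi_k(x_{n+1}) \ge \hat\pi_{(r^*_\alpha)}(x_{n+1})\}$ could contain more than $r^*_\alpha$ indices and the notation $\{\hat y_{(1)}, \dots, \hat y_{(r^*_\alpha)}\}$ is only well-defined up to a choice of tie-break among the equal-probability labels. I would handle this by declaring that the ordering $\hat y_{(1)}, \dots, \hat y_{(K)}$ is fixed by an arbitrary but consistent tie-breaking rule (e.g.\ by label index), and by interpreting $\{\hat y_{(1)}, \dots, \hat y_{(r^*_\alpha)}\}$ as the union of all top-$r^*_\alpha$ labels under any valid ordering, so that the inclusion continues to hold in the degenerate case. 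No further ingredients are needed beyond the definitions, so I do not expect a genuine obstacle here.
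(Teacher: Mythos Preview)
Your proposal is correct and matches the paper's treatment: the paper does not give a separate proof at all, stating only that ``this definition implies the following proposition,'' and your case analysis on the two branches of \eqref{eq:conf:int} is precisely the obvious unpacking that the paper leaves to the reader. Your extra care about ties in $\hat\pi$ is a reasonable addition; the paper is silent on this point as well.
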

We note that we can define the following conformity score for our method. This will help us understand why $1-\alpha$ coverage is guaranteed. 
\begin{align*}%
    \begin{split}
    c_i&=c(x_i, y_i)\\
    &= [\text{rank of } \hat\pi_{y_i}(x_i) \text{ in } \{\hat\pi_1(x_i), \dots, \hat \pi_K(x_i)\}] - 1\\
    &+\frac 1n[\text{rank of } \hat \pi_{y_i}(x_i)\text{ in } \{\hat\pi_{y_i}(x_1), \dots, \hat \pi_{y_i}(x_n)\} ]. 
    \end{split}
\end{align*}
Let's define the quantile $\hat Q_{\alpha}$ as the $\lfloor (n+1)\alpha\rfloor$-th largest value among the conformity scores ${c_1, c_2, \dots, c_{n}}$ calculated on the calibration set. To construct a prediction set with $1-\alpha$ coverage, we include all samples from the calibration set whose conformity scores $c_i$ are less than or equal to the quantile $\hat Q_\alpha$. In other words, the procedure for defining the prediction set is equivalent to selecting the calibration samples that satisfy the condition $c_i\le \hat Q_\alpha$, which ensures the desired coverage level of $1-\alpha$.

\begin{figure}[!htb]
\begin{center}
\begin{tikzpicture}[scale=0.7]
\begin{axis}[
    ybar,
    bar width=0.7,
    bar shift=0,
    xlabel={rank of the true class},
    ylabel={frequency},
    xmin=0.1,
    xmax=10.3,
    xtick={1,2,3,4,5,6,7,8,9,10},
    xticklabel style={/pgf/number format/fixed},
    ymin=0,
    ymax=0.68,
    ytick={0, 0.1, 0.2, 0.3, 0.4, 0.5, 0.6},
    yticklabel style={/pgf/number format/fixed},
    nodes near coords,
    every node near coord/.append style={font=\small, rotate=90, anchor=west},
    nodes near coords align={vertical},
    clip=false
]
\addplot[ybar,fill=matblue, draw=none, /pgf/number format/precision=5] coordinates {(1, 0.6) (2, 0.26) (3, 0.06) (4, 0.04) (5, 0.03) (6, 0.01) (7, 0) (8, 0) (9, 0) (10, 0)};
\draw[line width=1, dashed, color=red] (axis cs:2.5,0) -- (axis cs:2.5,0.6);,
\end{axis}
\end{tikzpicture}
\qquad %
\begin{tikzpicture}[scale=0.7]
\begin{axis}[
    ybar,
    bar width=0.03,
    bar shift=0,
    xlabel={3rd largest probability $\hat{\pi}_{(3)}$},
    ylabel={frequency},
    xmin=-0.02,
    xmax=0.35,
    xtick={0,.05,.1,.15,.2,.25,.3},
    xticklabel style={/pgf/number format/fixed},
    ymin=0,
    ymax=0.48,
    ytick={0, 0.1, 0.2, 0.3, 0.4, 0.5},
    yticklabel style={/pgf/number format/fixed},
    nodes near coords,
    every node near coord/.append style={font=\small, rotate=90, anchor=west},
    nodes near coords align={vertical},
    clip=false
]
\addplot[ybar interval,fill=matblue, draw=none, /pgf/number format/precision=5] coordinates {(0, 0.4) (0.05, 0.35)  (0.1, 0.1) (0.15, 0.1) (0.2, 0.05) (0.25, 0)};
\draw[line width=1, dashed, color=red] (axis cs:0.15,0) -- (axis cs:0.15,0.42);,
\end{axis}
\end{tikzpicture}
\end{center}
\caption{The figure illustrates the construction of a 90\% prediction set for a test sample with sorted probability vector $[0.55, 0.2, \textcolor{red}{0.15}, 0.1, 0, 0, 0, 0, 0, 0]$. The top three classes are included based on both the probability of ranks (\textbf{Left}) and the distribution of the 3rd largest probabilities in the calibration set (\textbf{Right}).}
\label{fig:example}
\end{figure}
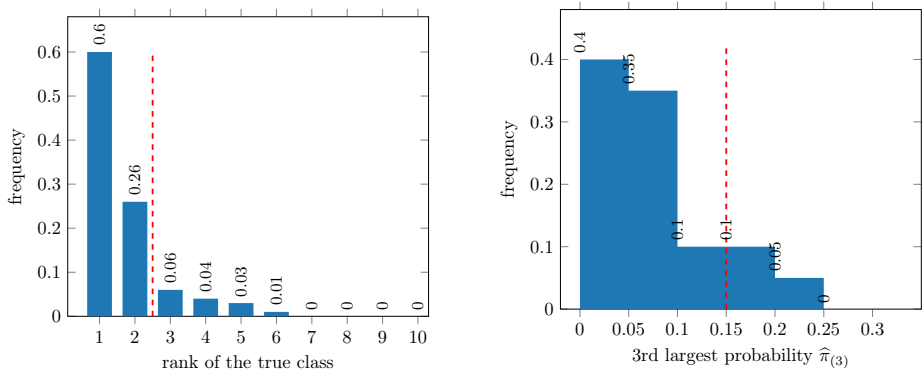

\paragraph{Example:} Let us consider the following example to understand our method better. After applying a training algorithm to the training samples with $K=10$ classes, we obtain the function $\hat{\pi}$. By applying $\hat{\pi}$ to the calibration samples $\{(x_i, y_i)\}_{i \in \mathcal{I}_2}$, we obtain the ranks of $\hat{\pi}_{y_i}(x_i)$ for $i \in \mathcal{I}_2$, which represent the ranks of the true class. This forms an empirical distribution on the set $\{1, 2, \dots, 10\}. $ Let us consider an example in 
Figure \ref{fig:example}. $(0.6, 0.26, 0.06, 0.04, 0.03, 0.01, 0, 0, 0, 0)$ is the empirical distribution mentioned above. Suppose we want to construct the prediction set $\hat C_\alpha$ with $\alpha=0.1$, i.e., coverage probability equals to 0.9. The rank distribution in the empirical distribution shows that the top two classes have a cumulative probability of 0.86, which falls short of the desired coverage of $0.9$. Including the top three classes increases the cumulative probability to 0.92, exceeding the desired coverage. Therefore, the size of the prediction set in this case will be 2 or 3. It takes value 2 or 3 depending on the result of $\hat \pi$ applying on the test sample. Applying $\hat\pi$ on a test sample $x_{n+1}$ and obtain a sorted probability vector $(\hat{\pi}_{(1)}(x_{n+1}), \hat{\pi}_{(2)}(x_{n+1}), \dots, \hat{\pi}_{(10)}(x_{n+1}))=(0.55, 0.2, 0.15, 0.1, 0, 0, 0, 0, 0, 0)$.  

To determine whether the class with rank 3 should be included in the prediction set, we compare $\hat{\pi}_{(3)}(x_i)$ to the distribution of the 3rd largest probability. The calculation $\frac{(1-\alpha) - 0.86}{0.92-0.86} = \frac{2}{3} < 0.85$ indicates that the class with rank 3 should be included in the prediction set. Another equivalent way to think of this would be the probability 0.15 is the top 15\% among all $\{\hat \pi_{(3)}(x_i): i=1, 2, \dots, n\}$, so the rank 3 class should be included in the prediction set $\hat C_\alpha$. 

\subsection{Comparison with existing works} 
Our method may seem different from the approaches in Section~\ref{sec:related:work}, but there are connections. If $\hat\pi_1(x_i), \dots, \hat\pi_K(x_i)$ are nearly identical, the ascending order of ranks in equation~\eqref{eq:r} is almost equivalent to the descending order of $p$-values in APS, suggesting our method makes fewer assumptions about $\hat\pi$. In tie-breaking, APS uses a uniform random variable, while we use the THR idea to include labels with sufficiently large $\hat\pi$ values. Thus, our method incorporates aspects of both APS and THR.

\section{Theoretical Coverage Guarantee}\label{sec:theory}

In this section, we will demonstrate that our approach can theoretically achieve $1-\alpha$ coverage. To begin, we will define the concept of exchangeability of random variables. 

\begin{definition}[Exchangeability]
Let $Z_1, Z_2, \dots,$ $Z_n$ be a sequence of random variables. The sequence is said to be exchangeable if, for any permutation $\pi$ of the indices $[n]$, the joint distribution of the permuted sequence $(Z_{\pi(1)}, Z_{\pi(2)}, \dots, Z_{\pi(n)})$ is identical to the joint distribution of the original sequence $(Z_1, Z_2, \dots, Z_n)$. 
\end{definition}

This assumption about the dataset is widely used when considering calibration samples and test samples~\cite{romano2020classification,huang2024conformal}.  Assuming exchangeability, we can demonstrate the following result: for a test sample $X_{n+1}$ that has not been seen in the training or calibration set, the prediction set output by Algorithm~\ref{alg:rank} will include the true label $Y_{n+1}$ with a probability of at least $1-\alpha$.

\begin{theorem}\label{thm: coverage}
If the samples $(X_i, Y_i)$, for $i \in [n+1]$, are exchangeable and $\mathcal{B}$ from Algorithm~\ref{alg:rank} is invariant to permutations of its input samples, the output of Algorithm~\ref{alg:rank} satisfies:
\begin{equation}
\mathbb{P}\left(Y_{n+1} \in \hat{C}_{\alpha}(X_{n+1})\right) \geq 1 - \alpha.
\end{equation}
\end{theorem}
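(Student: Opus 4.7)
The plan is to reduce the theorem to the standard split-conformal coverage lemma via the conformity score $c_i$ displayed just after \eqref{eq:conf:int}. Because the integer part of $c_i$ equals $r_i-1\in\{0,\dots,K-1\}$ while the fractional part lies in $(0,1]$, we always have $c_i\in(r_i-1,r_i]$. Ordering the $c_i$'s therefore primarily sorts by the rank $r_i$, with the fractional term acting as a tie-breaker among samples that share the same integer rank.

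The first step is to show that the prediction set produced by Algorithm~\ref{alg:rank} is exactly the sublevel set $\{y:c(X_{n+1},y)\le \hat Q_\alpha\}$, where $\hat Q_\alpha$ is the $\lfloor(n+1)\alpha\rfloor$-th largest value of $c_1,\dots,c_n$. The definition of $r^*_\alpha$ in \eqref{eq:r} forces $\hat Q_\alpha\in(r^*_\alpha-1,r^*_\alpha]$, so any class whose rank is strictly less than $r^*_\alpha$ is automatically included and any class with rank strictly greater than $r^*_\alpha$ is automatically excluded. The only ambiguous case is the $r^*_\alpha$-th ranked class, and its inclusion depends on whether the fractional part of $c(X_{n+1},y)$ falls below that of $\hat Q_\alpha$. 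Unwinding \eqref{eq:p} and \eqref{eq:hat:pi}, the fractional part of $\hat Q_\alpha$ corresponds precisely to $\pi^*$ (the $\lceil np\rceil$-th largest $\hat\pi_{(r^*_\alpha)}(x_i)$), and the count $\lceil np\rceil$ is calibrated by \eqref{eq:p} so the match is exact. Hence $\hat C_\alpha(X_{n+1})=\{y:c(X_{n+1},y)\le\hat Q_\alpha\}$.

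With the equivalence in hand, the remainder is the textbook split-conformal argument. Because $\mathcal B$ is permutation invariant and trained only on $\mathcal I_1$, the score $c$ is a symmetric function of the calibration-and-test tuple, so exchangeability of $(X_i,Y_i)_{i\in\mathcal I_2\cup\{n+1\}}$ transfers to exchangeability of the augmented score sequence $c_1,\dots,c_n,c_{n+1}$, with $c_{n+1}=c(X_{n+1},Y_{n+1})$. Consequently the rank of $c_{n+1}$ in this sequence is (weakly) uniform on $\{1,\dots,n+1\}$, which yields
\begin{equation*}
\mathbb P(c_{n+1}>\hat Q_\alpha)\le \frac{\lfloor(n+1)\alpha\rfloor}{n+1}\le\alpha,
\end{equation*}
and the theorem follows by combining this bound with the equivalence from the previous step.

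The main obstacle I anticipate is the equivalence step. Ties in the integer rank $r_i$ are the whole reason for introducing the fractional tie-breaker, and one must verify by careful counting that \eqref{eq:p} and \eqref{eq:hat:pi} have been engineered precisely so that the two-stage inclusion rule in \eqref{eq:conf:int} coincides with thresholding $c$ at a single scalar quantile. A minor secondary subtlety is how to handle ties among the values $\hat\pi_{(r^*_\alpha)}(x_i)$ themselves; these can be dispatched either by appealing to continuity of the underlying score distribution or by an auxiliary uniform-random perturbation so that the rank of $c_{n+1}$ is exactly uniform.
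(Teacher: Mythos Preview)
Your proposal is correct and follows essentially the same strategy as the paper: reduce membership $Y_{n+1}\in\hat C_\alpha(X_{n+1})$ to the event that the conformity score $c_{n+1}$ falls below the $\lfloor(n+1)\alpha\rfloor$-th largest calibration score, then invoke exchangeability of $(c_1,\dots,c_{n+1})$ to get the uniform rank argument. The only difference is granularity: the paper's proof simply asserts the equivalence in one line, whereas you sketch how the integer/fractional decomposition of $c_i$ forces $\hat Q_\alpha\in(r^*_\alpha-1,r^*_\alpha]$ and how \eqref{eq:p}--\eqref{eq:hat:pi} are tuned so the fractional threshold matches the $\pi^*$ rule in \eqref{eq:conf:int}; you also correctly flag the tie-handling subtlety that the paper leaves implicit.
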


\begin{proof}
By the definition of the prediction set $\hat C_\alpha$, $Y_{n+1}\in \hat C_\alpha(X_{n+1})$ is equivalent to the \score{} $c_{n+1} = c(x_{n+1}, y_{n+1})< c_{(\lfloor(n+1)\alpha\rfloor)}$. 
By the exchangeablility of $(X_1, Y_1), \dots, (X_{n+1}, Y_{n+1})$, $c(X_1, Y_1), \dots, c(X_{n+1}, Y_{n+1})$ is also exchangeable. The rank of $c(X_{n+1}, Y_{n+1})$ has a uniform distribution on $\{1, \dots, n+1\}$. The rank of $c(X_{n+1}, Y_{n+1})$ greater than $\lfloor (n+1)\alpha\rfloor$ with probability $\frac{n+1-\lfloor (n+1)\alpha\rfloor}{n+1}$, which is at least $1-\alpha$.
\end{proof}

\section{Experiments}
This section presents experiments that evaluate the performance of prediction sets generated by various methods, including APS \cite{romano2020classification}, RAPS \cite{angelopoulos2021uncertainty}, SAPS \cite{huang2024conformal}, and our proposed method (RANK), on CV and NLP tasks involving multiclass classification.

\begin{figure*}[!htb]
    \centering
    
    \begin{subfigure}[b]{0.23\textwidth}
        \includegraphics[width=\textwidth]{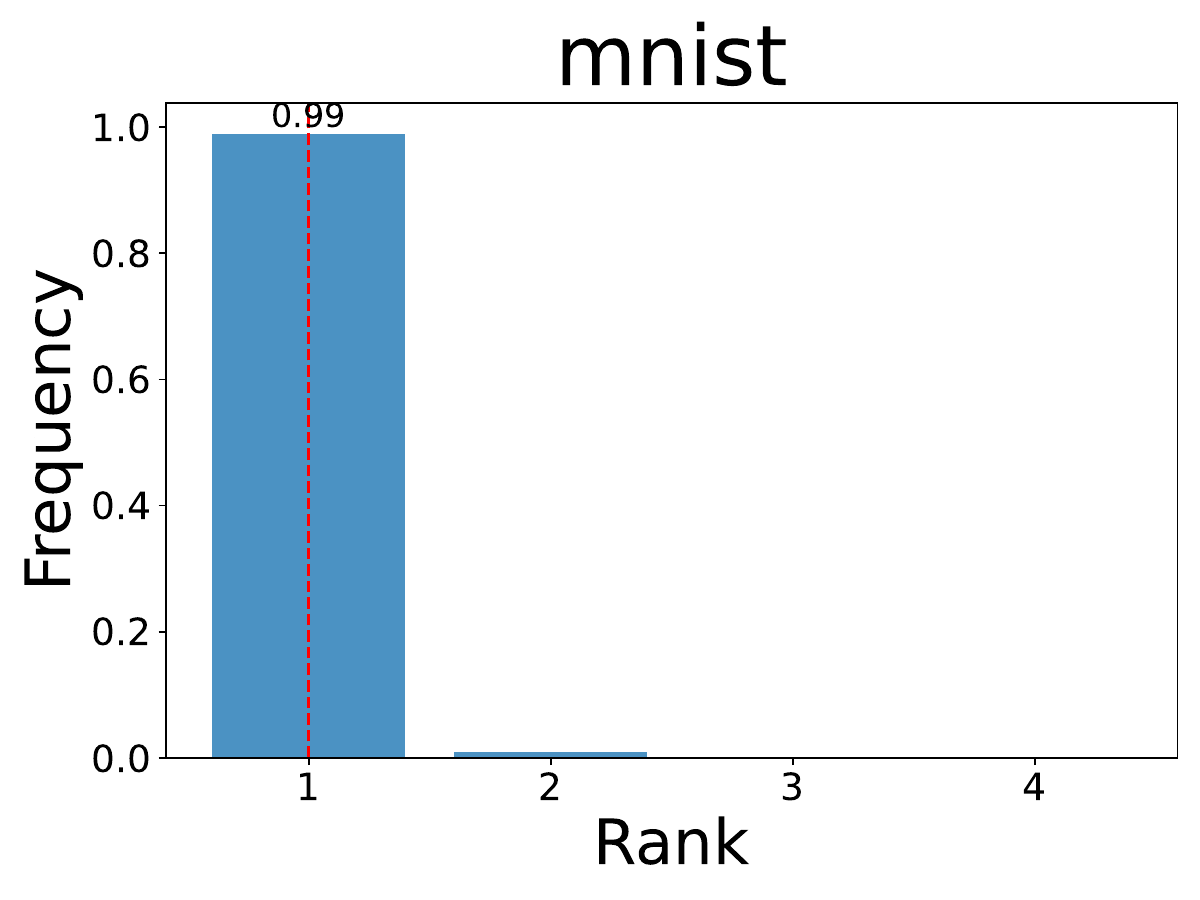}
    \end{subfigure}
    \hspace{0.006\textwidth}
    \begin{subfigure}[b]{0.23\textwidth}
        \includegraphics[width=\textwidth]{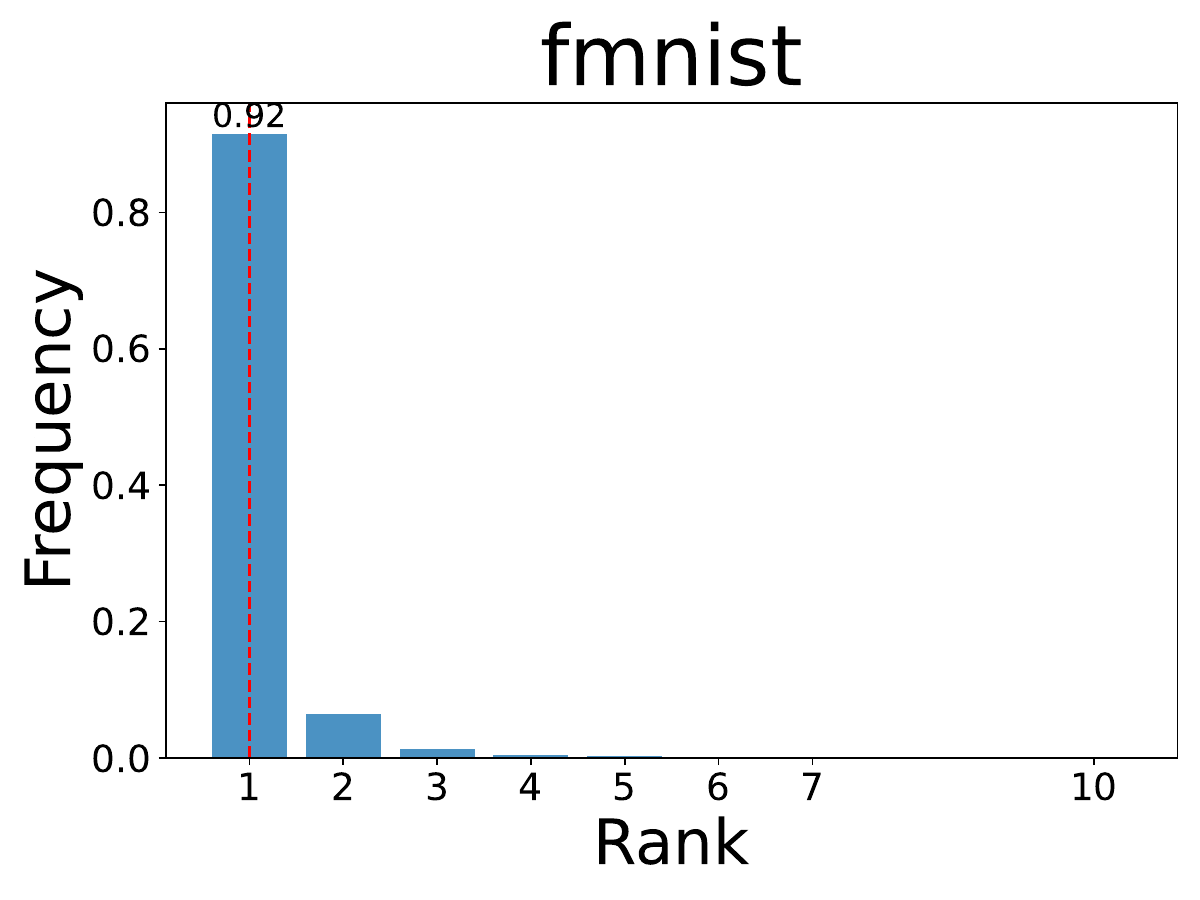}
    \end{subfigure}
    \hspace{0.006\textwidth}
    \begin{subfigure}[b]{0.23\textwidth}
        \includegraphics[width=\textwidth]{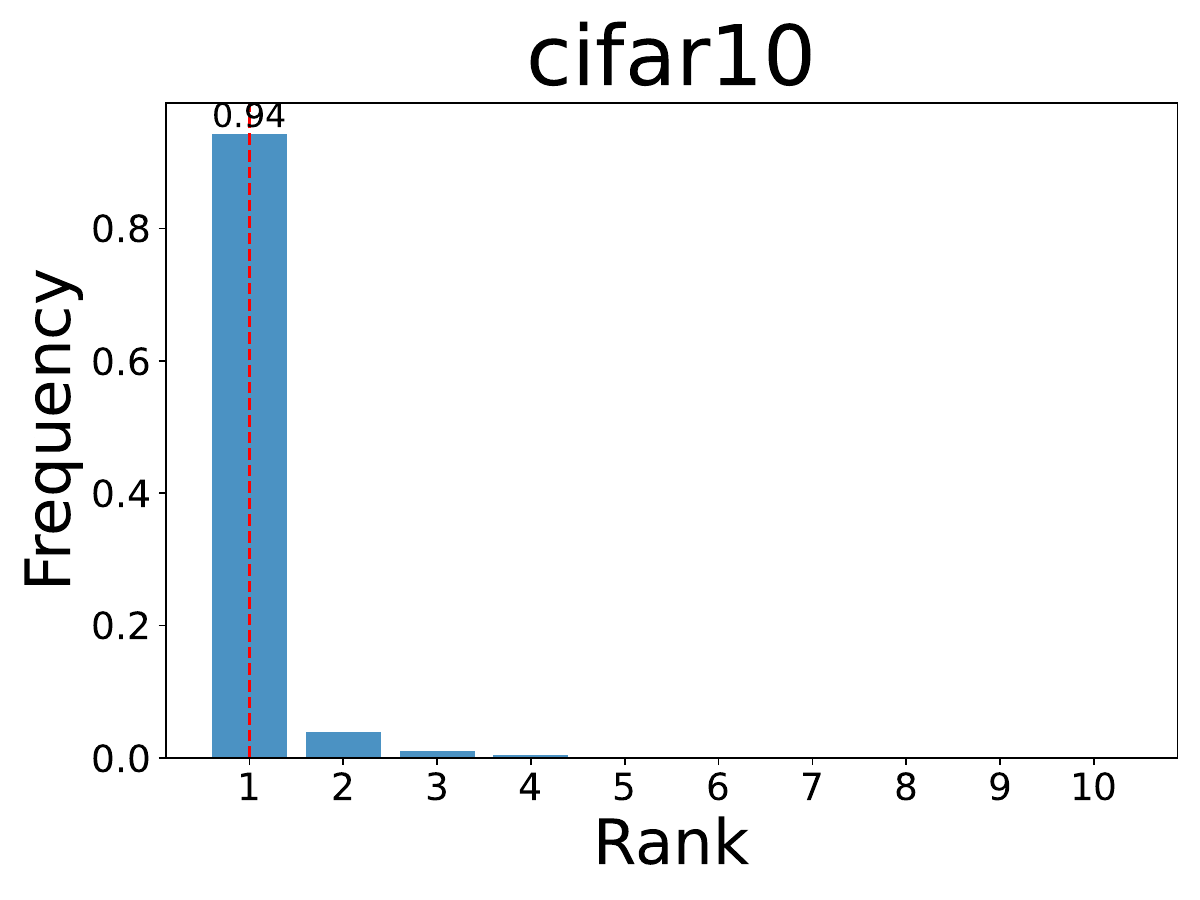}
    \end{subfigure}
    \hspace{0.006\textwidth}
    \begin{subfigure}[b]{0.23\textwidth}
        \centering
        \includegraphics[width=\textwidth]{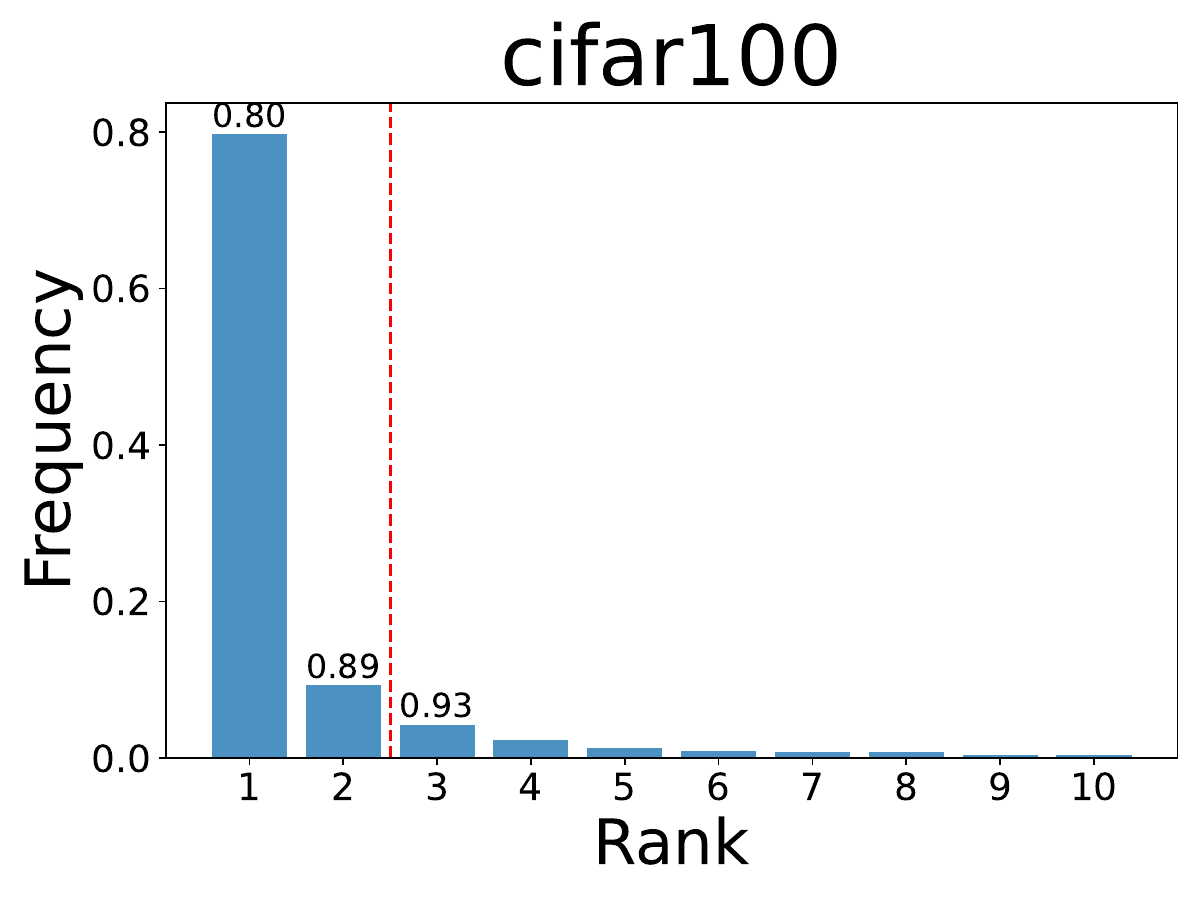}
    \end{subfigure}
    
    \begin{subfigure}[b]{0.23\textwidth}
        \includegraphics[width=\textwidth]{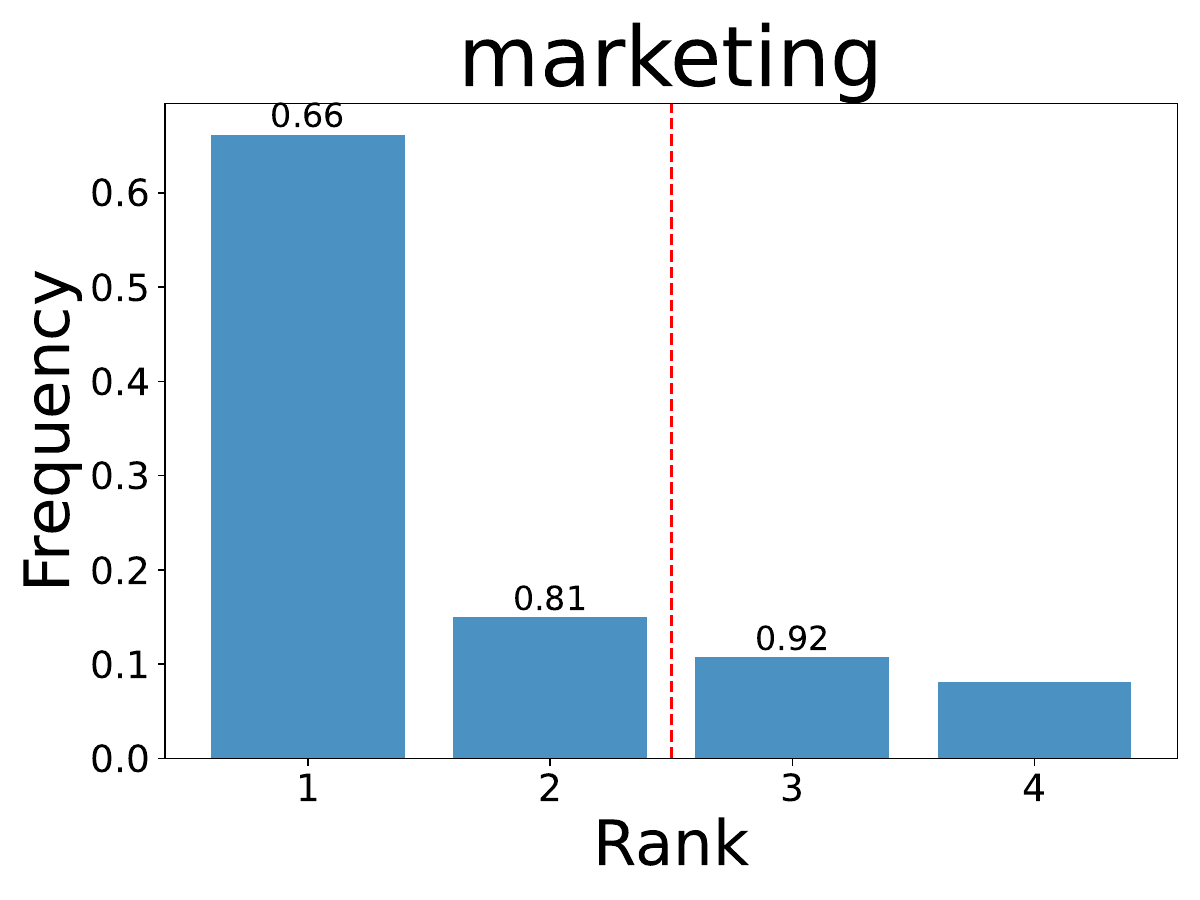}
    \end{subfigure}
    \hspace{0.01\textwidth}
    \begin{subfigure}[b]{0.23\textwidth}
        \includegraphics[width=\textwidth]{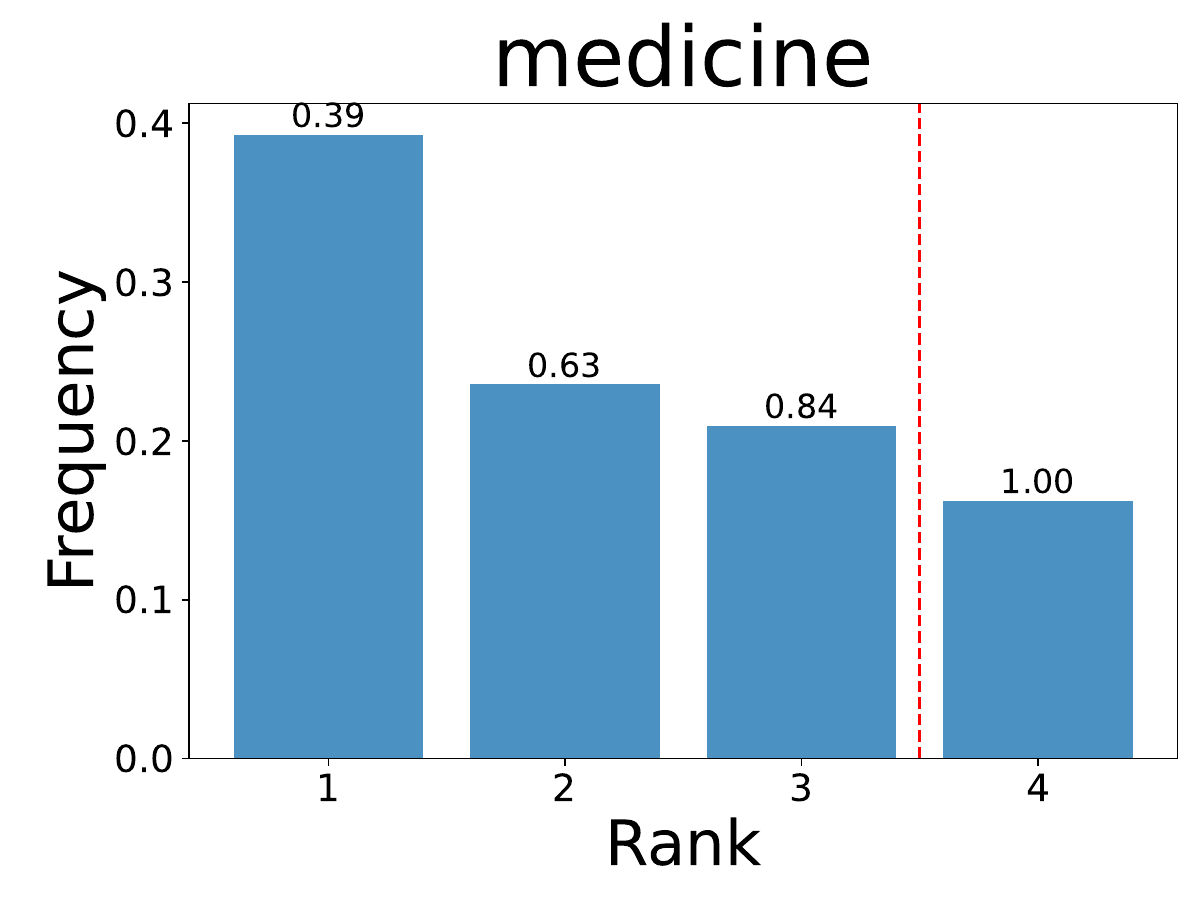}
    \end{subfigure}
    \hspace{0.01\textwidth}
    \begin{subfigure}[b]{0.23\textwidth}
        \includegraphics[width=\textwidth]{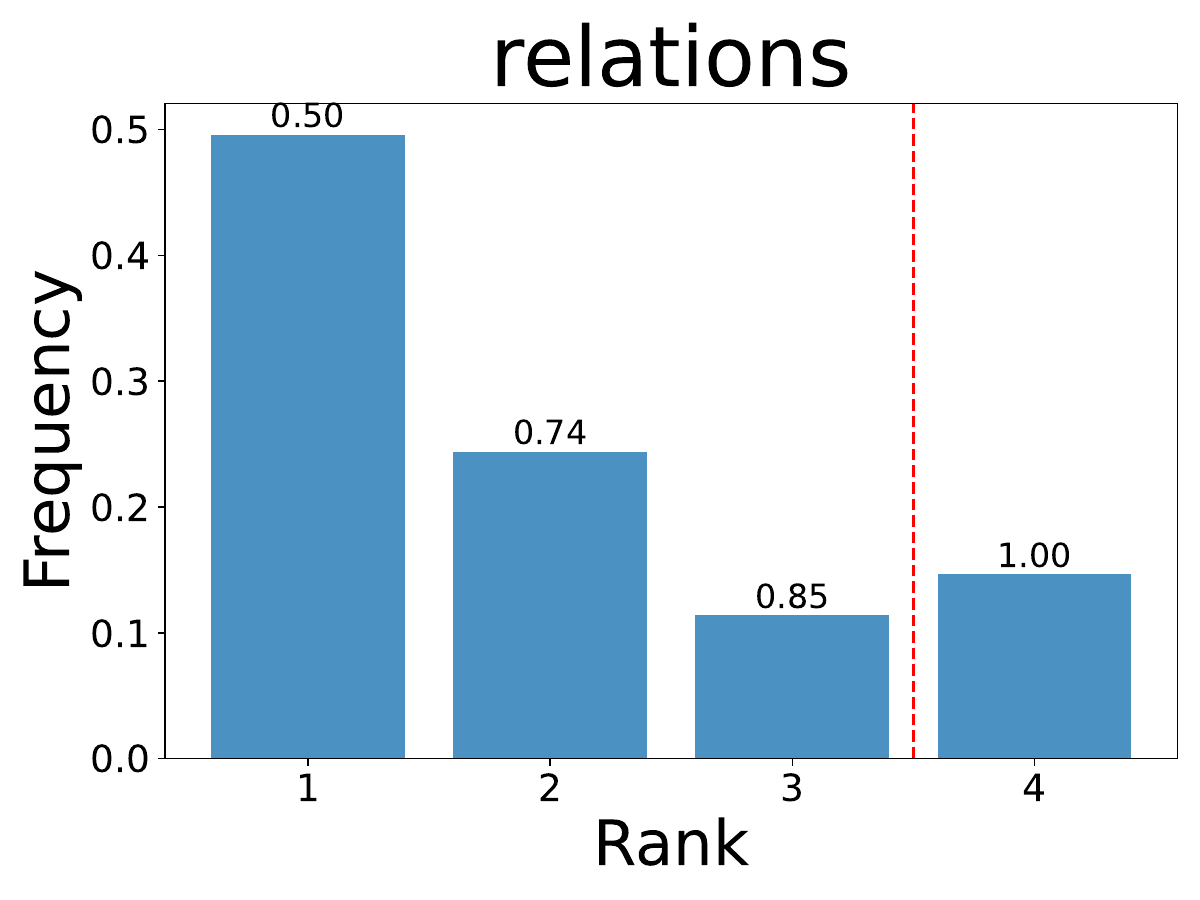}
    \end{subfigure}

    \begin{subfigure}[b]{0.23\textwidth}
        \includegraphics[width=\textwidth]{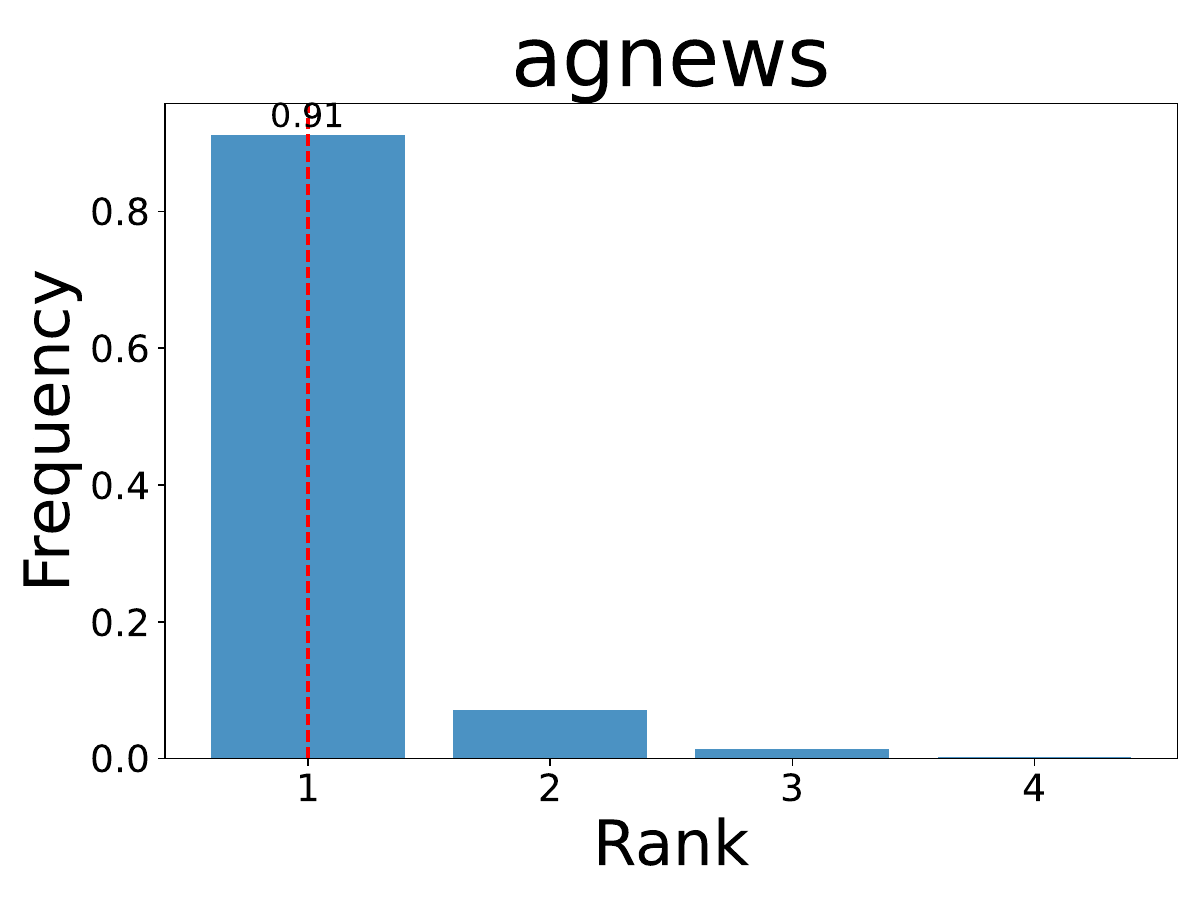}
    \end{subfigure}
    \hspace{0.006\textwidth}
    \begin{subfigure}[b]{0.23\textwidth}
        \includegraphics[width=\textwidth]{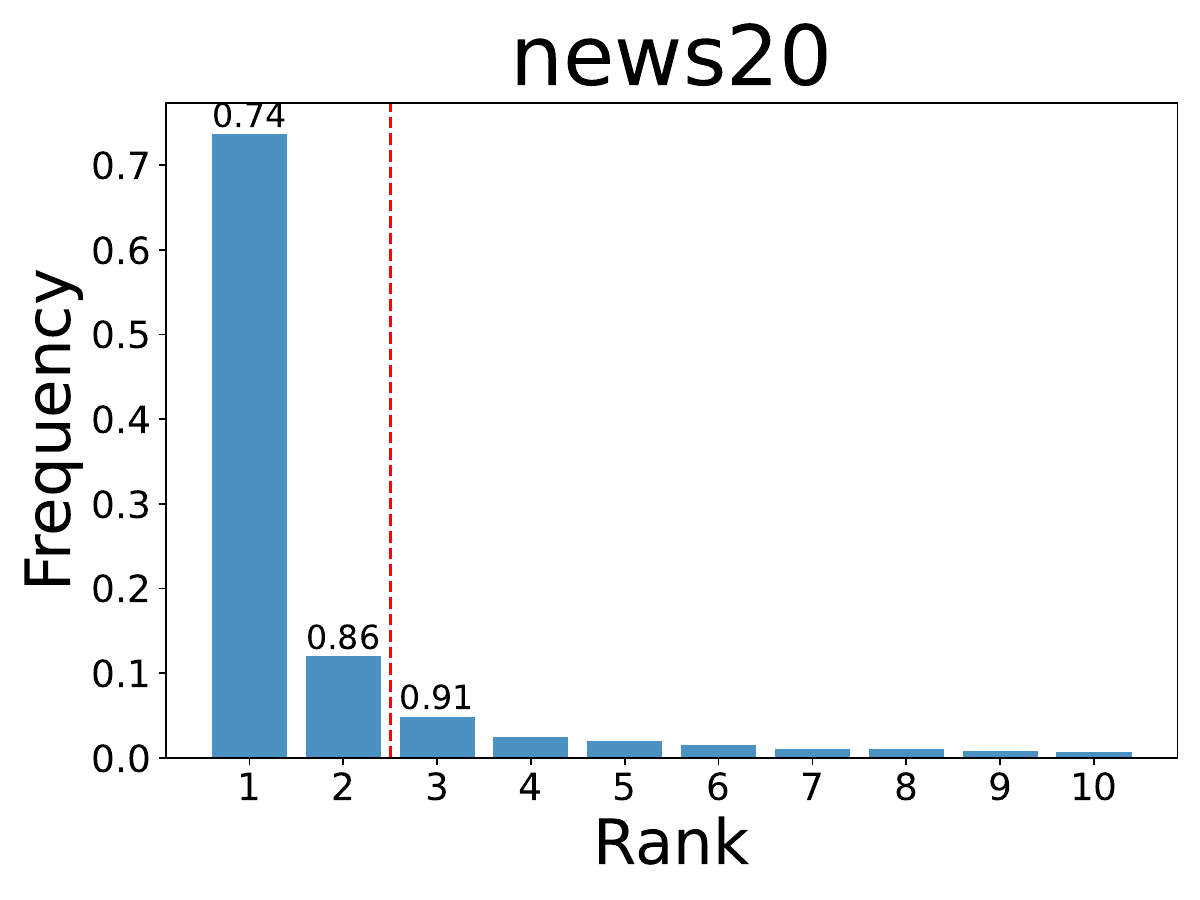}
    \end{subfigure}
    \hspace{0.006\textwidth}
    \begin{subfigure}[b]{0.23\textwidth}
        \includegraphics[width=\textwidth]{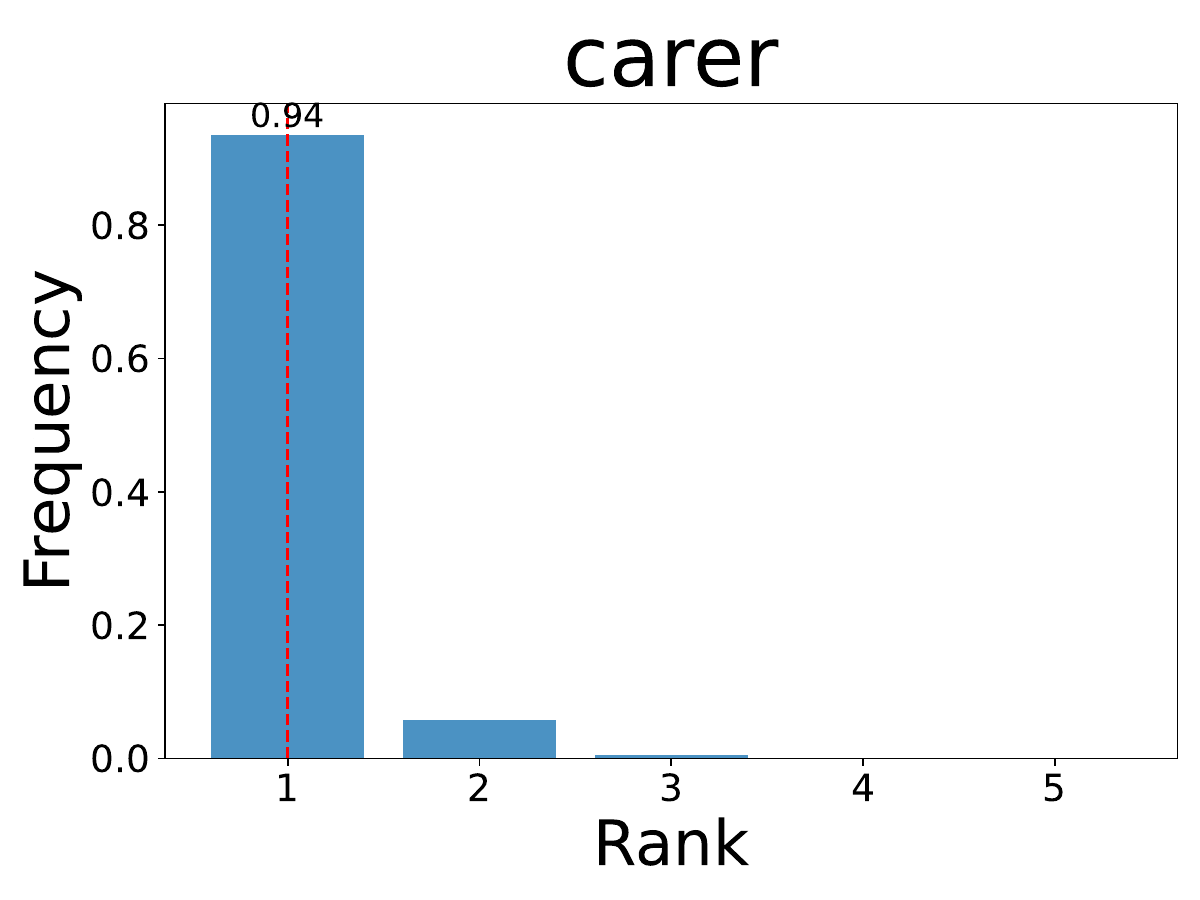}
    \end{subfigure}
    \hspace{0.006\textwidth}
    \begin{subfigure}[b]{0.23\textwidth}
        \centering
        \includegraphics[width=\textwidth]{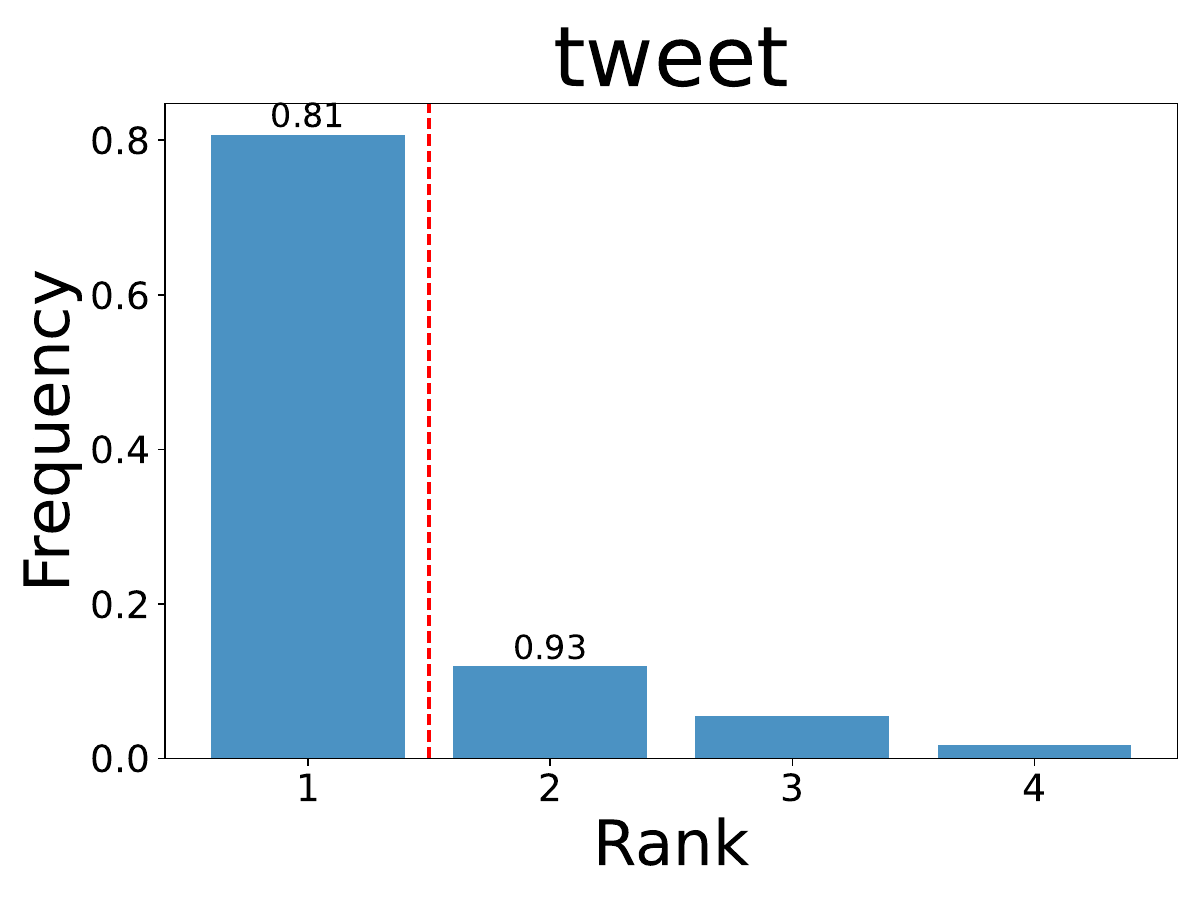}
    \end{subfigure}
    
    \caption{Rank distribution plots of the true class ranks for different datasets. The vertical red line indicates the rank threshold where the cumulative probability exceeds 0.90, corresponding to $r^*_\alpha$ from Algorithm~\ref{alg:rank}. This value aligns with the average prediction set size for $\alpha=0.1$ in Table \ref{tab:alpha_0.1_results}. For CIFAR-100 (cifar100) and 20 Newsgroup (news20), we plot only the rank distribution for the top 10 ranks.}
    \label{fig:rank_figures}
\end{figure*}

\noindent
\textbf{Image Classification: }
We evaluate the performance of prediction sets generated by various methods on four benchmark datasets for image classification: MNIST \cite{lecun2010mnist}, Fashion-MNIST \cite{fmnist}, CIFAR-10 \cite{alex2009learning}, and CIFAR-100 \cite{alex2009learning}. Both MNIST and Fashion-MNIST contain 60000 training images and 10000 test images, while both CIFAR-10 and CIFAR-100 contain 50000 training images and 10000 test images. 
For MNIST and Fashion-MNIST, a Multilayer Perceptron with two hidden layers is used as the base model, whereas for CIFAR-10 and CIFAR-100, a ResNet56 network is employed.

For NLP, the evaluation is conducted on three tasks: Multi-Choice Question-Answering, Topic Classification, and Emotion Recognition. 

\noindent
\textbf{Multi-Choice Question-Answering: }
We evaluate our method on the MMLU benchmark \cite{hendryckstest2021} for the Multi-Choice Question-Answering task, following the approach in \cite{kumar2023conformal}. The datasets are generated using the LLaMA-13B model \cite{touvron2023llama} and consist of questions from three domains: college medicine (191 questions), marketing (269 questions), and public relations (123 questions).

\noindent
\textbf{Topic Classification: }
For topic classification, we assess our method on two datasets: AG News and 20 Newsgroups. AG News is a subset of AG's corpus of news articles, created by combining the titles and description fields of articles from the four largest topic classes: ``World'', ``Sports'', ``Business'', and ``Sci/Tech''. The AG News dataset contains 30000 training samples and 1900 test samples per class. The 20 Newsgroups dataset comprises newsgroup posts on 20 topics, split into a training set of 11314 posts and a test set of 7532 posts.

\noindent
\textbf{Emotion Recognition: }
To evaluate our method's performance on emotion recognition, we utilize two datasets: CARER and TweetEval. CARER consists of English Twitter messages labeled with six basic emotions: anger, fear, joy, love, sadness, and surprise. The dataset has a training set of 15969 tweets and a test set of 2000 tweets. TweetEval, on the other hand, contains tweets categorized into four emotions: anger, joy, optimism, and sadness. This dataset includes 6838 training tweets and a test set of 1421 tweets.

\begin{table*}[!htb]
\centering
\resizebox{\textwidth}{!}{
\fontsize{9}{10}\selectfont
\begin{tabular}{|l|cccc|cccc|cccc|}
\hline
\multirow{2}{*}{Data} & \multicolumn{4}{c|}{Coverage} & \multicolumn{4}{c|}{Size} & \multicolumn{4}{c|}{SSCV} \\
& Ours & APS & RAPS & SAPS & Ours & APS & RAPS & SAPS & Ours & APS & RAPS & SAPS \\
\hline
mnist & 0.90118 & 0.89958 & 0.90120 & 0.89979 & \textbf{0.90130} & 0.92360 & 0.92361 & 0.92799 & \textbf{0.00478} & 0.08604 & 0.03890 & 0.09580 \\
fmnist & 0.90046 & 0.89984 & 0.89974 & 0.89968 & \textbf{0.96771} & 1.12594 & 1.10348 & 1.15082 & \textbf{0.00456} & 0.04865 & 0.00914 & 0.05406 \\
cifar10 & 0.90046 & 0.90025 & 0.90044 & 0.90053 & \textbf{0.92664} & 1.00705 & 0.99371 & 1.00844 & \textbf{0.00460} & 0.05223 & 0.02110 & 0.05191 \\
cifar100 & 0.90016 & 0.89860 & 0.90000 & 0.90027 & 2.67533 & 11.93725 & 2.44703 & \textbf{2.18324} & \textbf{0.05052} & 0.08719 & 0.21440 & 0.13381 \\
marketing & 0.8962 & 0.9016 & 0.8997 & 0.8987 & \textbf{2.6704} & 2.8059 & 2.7920 & 2.7065 & \textbf{0.0368} & 0.0421 & 0.0408 & 0.0699 \\
medicine & 0.9043 & 0.8993 & 0.9000 & 0.9064 & \textbf{3.3550} & 3.3928 & 3.3841 & 3.3723 & \textbf{0.1005} & 0.1023 & 0.1032 & 0.1021 \\
relations & 0.8915 & 0.9048 & 0.8995 & 0.9024 & \textbf{3.2260} & 3.3569 & 3.3268 & 3.2903 & \textbf{0.1007} & 0.1026 & 0.1011 & 0.1033 \\
agnews & 0.9001 & 0.8993 & 0.8997 & 0.8992 & \textbf{0.9703} & 1.1654 & 1.1323 & 1.1807 & \textbf{0.0048} & 0.1000 & 0.0464 & 0.1000 \\
news20 & 0.8992 & 0.9008 & 0.9004 & 0.9004 & 4.0828 & \textbf{3.2626} & 3.9847 & 3.4209 & \textbf{0.0093} & 0.0339 & 0.0403 & 0.0894 \\
carer & 0.8985 & 0.8990 & 0.9015 & 0.9005 & \textbf{0.9305} & 1.0390 & 1.0385 & 1.1065 & \textbf{0.0110} & 0.1000 & 0.0830 & 0.1000 \\
tweet & 0.9013 & 0.9007 & 0.8971 & 0.9007 & \textbf{1.3382} & 1.4801 & 1.4290 & 1.4476 & 0.0584 & 0.1000 & \textbf{0.0414} & 0.1000 \\
\hline
\end{tabular}}
\caption{Evaluation metrics with $\alpha = 0.1$. Coverage (\ref{eq:coverage}): greater than or closer to $1-\alpha=0.9$ is better. Size (\ref{eq:size}): smaller is better. SSCV (\ref{eq:sscv}): smaller is better. \textbf{Bold} numbers indicate optimal performance.}
\label{tab:alpha_0.1_results}
\end{table*}

Let us denote the test set by $\mathcal I_3$. We assess the performance of the different methods using the following three metrics.

\textbf{Coverage Rate (Coverage):}
The coverage rate measures the proportion of test instances where the true label is included in the prediction set. A higher coverage rate indicates better performance.
\begin{equation}\label{eq:coverage}
    \textrm{Coverage} = \frac{1}{|\mathcal{I}_{3}|} \sum_{i\in \mathcal{I}_{3}} \mathbbm{1}(y_i \in \hat{C}(x_i)),
\end{equation}

\textbf{Average Size (Size):}
The average size refers to the mean number of labels in the prediction sets. Smaller sizes are consider more precise and informative of the labels in the prediction set.
\begin{equation}\label{eq:size}
    \textrm{Size} = \frac{1}{|\mathcal{I}_{3}|} \sum_{i\in \mathcal{I}_{3}}  |\hat{C}(x_i)|.
\end{equation}

\textbf{Size-Stratified Coverage Violation (SSCV):}
The size-stratified coverage violation \cite{angelopoulos2021uncertainty} evaluates the consistency of coverage across different prediction set sizes $\{S_j\}_{j=1}^s$, where $S_1, S_2, \dots, S_s$ are partitions of $[K]$. Let $J_j = \{i\in\mathcal{I}_3: |\hat{C}(x_i)| \in S_j\}$ denote the indices of examples stratified by the prediction set size $S_j$. Then we define
\begin{equation}\label{eq:sscv}
 \mathrm{SSCV}(\hat{C}, \{S_j\}_{j=1}^s)
 = \sup_{j\in[s]} \left|\frac{|\{i \in J_j: y_i \in \hat{C}(x_i)\}|}{|J_j|} - (1 - \alpha)\right|.
\end{equation}
Smaller SSCV indicates more stable coverage.

Throughout the experiments, the split-conformal prediction framework is employed to construct the prediction sets. Different $\alpha$ values ranging from 0.1 to 0.3 are chosen, and the mean Coverage, Size, and SSCV metrics are computed across 100 repetitions. %

\begin{figure*}[!tb]
\centering
\includegraphics[width=0.99\textwidth]{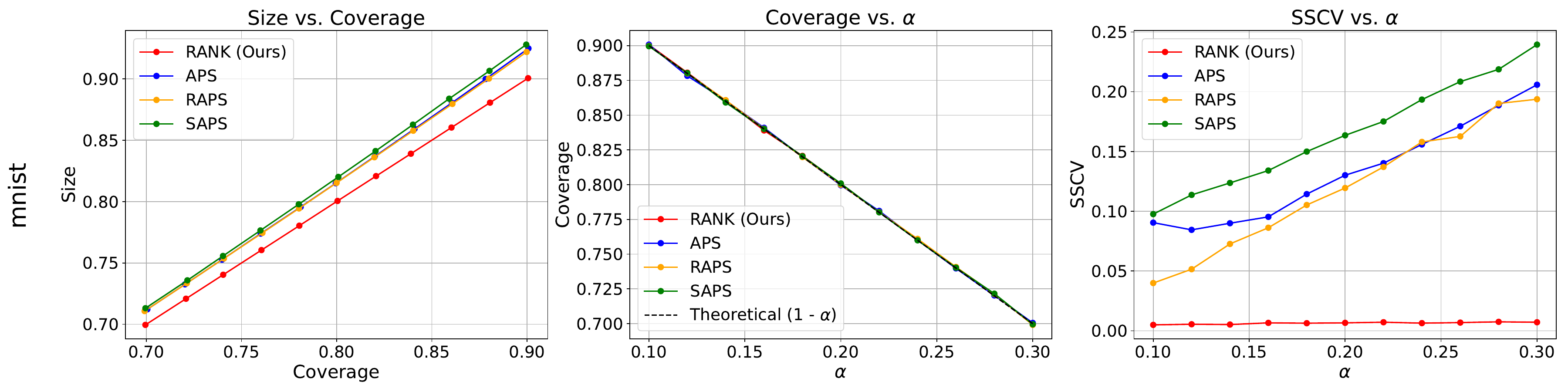}
\includegraphics[width=0.99\textwidth]{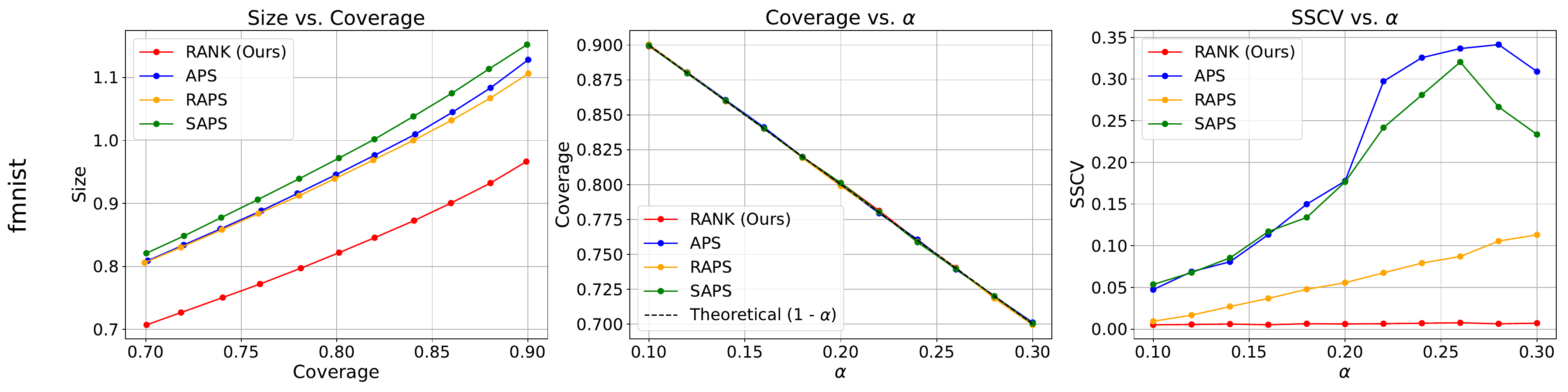}
\includegraphics[width=0.99\textwidth]{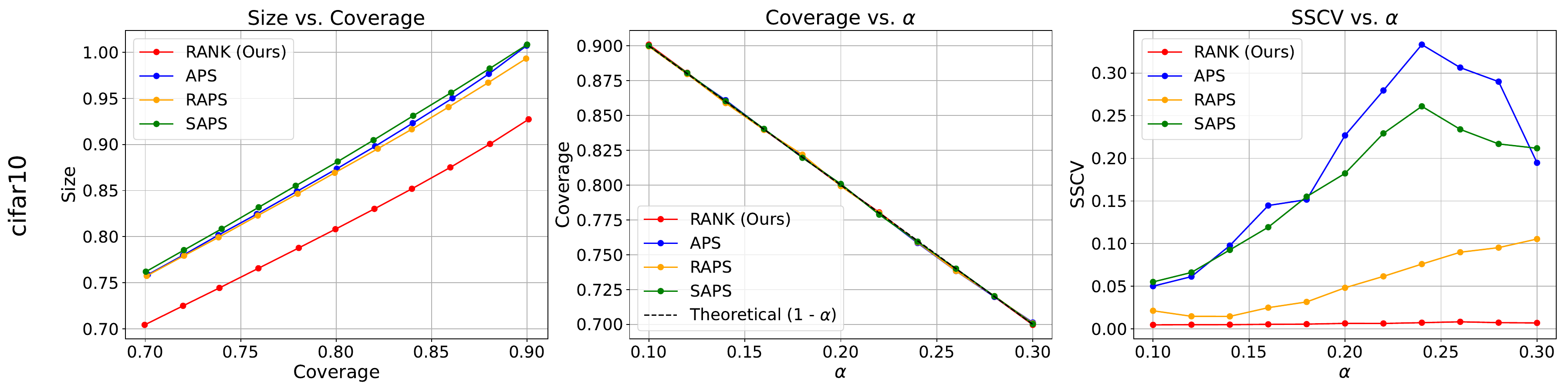}
\includegraphics[width=0.99\textwidth]{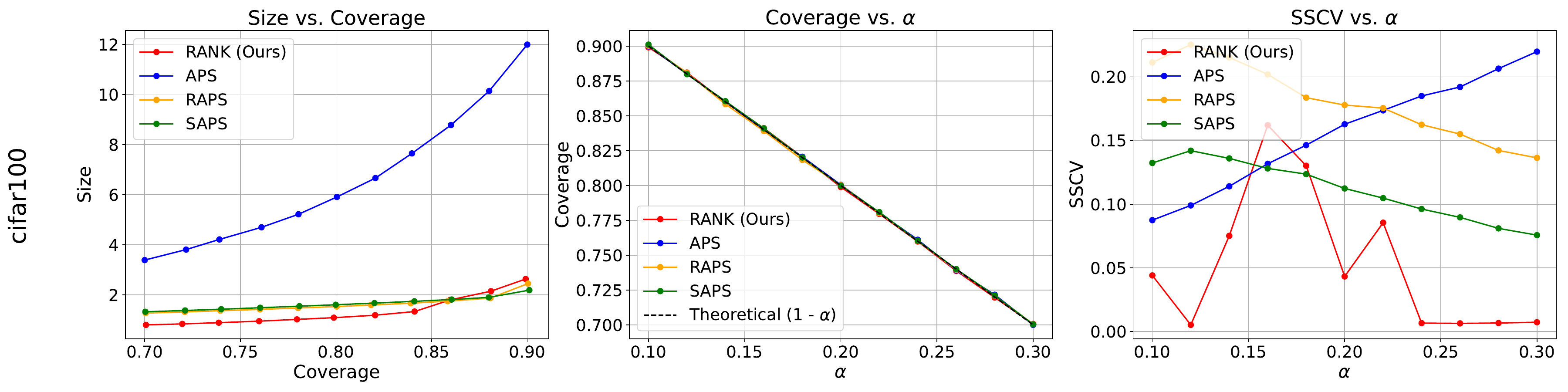}
\caption{Results for Image Classification on MNIST (mnist), Fashion-MNIST (fmnist), CIFAR-10 (cifar10), and CIFAR-100 (cifar100).}\label{fig: image classification}
\end{figure*}

Table \ref{tab:alpha_0.1_results} presents the results for $\alpha=0.1$, while Figures \ref{fig: image classification}, \ref{fig: mcqa}, and \ref{fig: topic emotion} show the results on various tasks for $\alpha$ ranging from $0.1$ to $0.3$. The left subfigure compares the Size vs. Coverage trade-off. A lower curve indicates that the method can achieve the desired coverage using a smaller prediction set size. The middle subfigure illustrates the relationship between coverage and $\alpha$. Methods closer to the theoretical line $1-\alpha$ are considered better. The right subfigure displays the SSCV vs. $\alpha$ plot. A lower curve means that the method can achieve $1-\alpha$ coverage more consistently across different strata of prediction set sizes. 

The experiment results show that our method (RANK) preforms better than other completing methods in most classification tasks, when measuring the performance by prediction set size versus coverage, except for the datasets cifar100 and news20 with $\alpha\le 0.15$. The underperformance of our proposed method in these cases can be attributed to the large number of classes in these datasets. This results in a more dispersed rank distribution, making it challenging for our method to construct efficient prediction sets, particularly when $\alpha$ is small.
To investigate our method's suboptimal performance on the cifar100 and news20 dataset compared to other methods, we examine their rank distribution plots in Figure~\ref{fig:rank_figures}. The plots reveal a long-tailed distribution, with many instances having high ranks. Our method may be less effective in such cases, as it does not explicitly minimize the tail probability of the ranks, unlike the APS-type approach, which is designed to handle these situations more effectively.

In other datasets, our performance is overwhelmingly better than the others. When comparing the SSCV metric with other methods, our approach demonstrates remarkably consistent coverage across most datasets.

\begin{figure*}[!tb]
\centering
\includegraphics[width=0.99\textwidth]{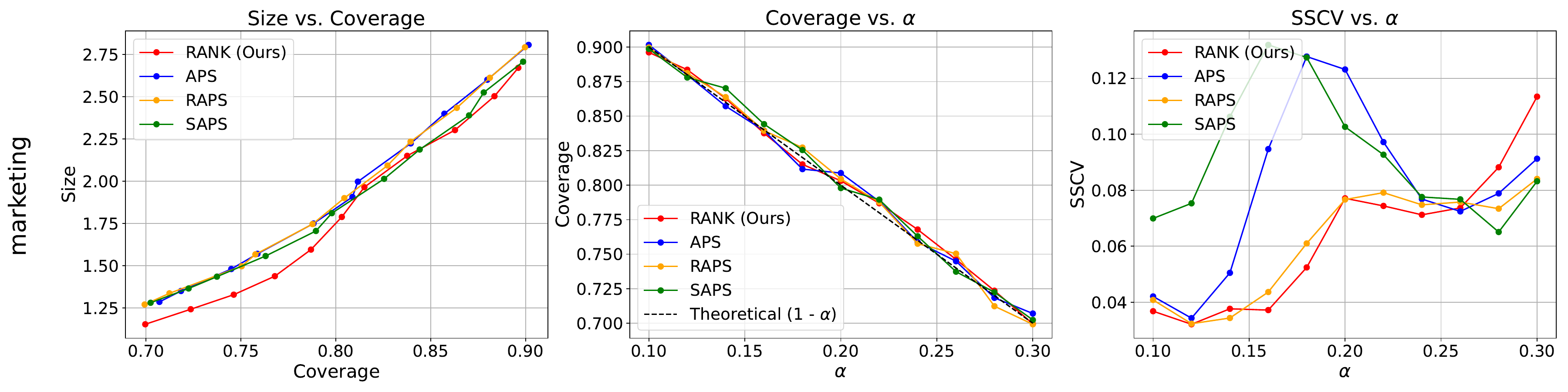}
\includegraphics[width=0.99\textwidth]{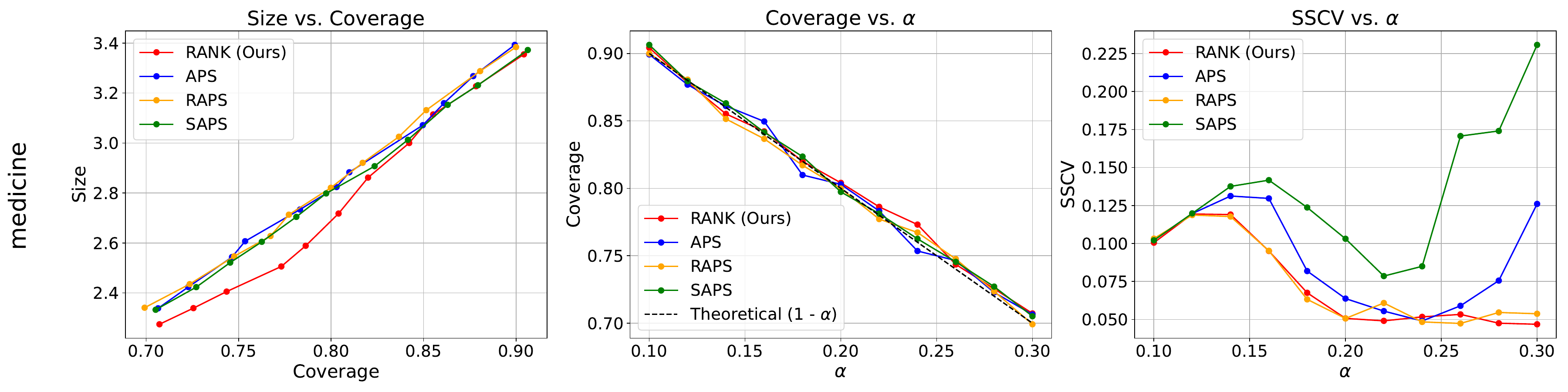}
\includegraphics[width=0.99\textwidth]{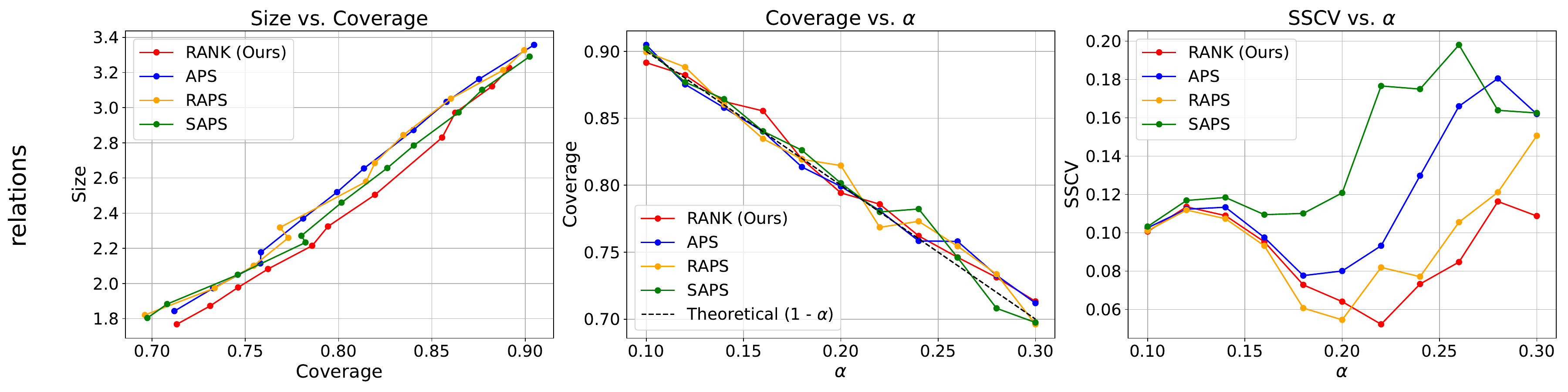}
\caption{Results for the Multi-Choice Question-Answering datasets.}\label{fig: mcqa}
\end{figure*}

\begin{figure*}[!tb]
\centering
\includegraphics[width=0.99\textwidth]{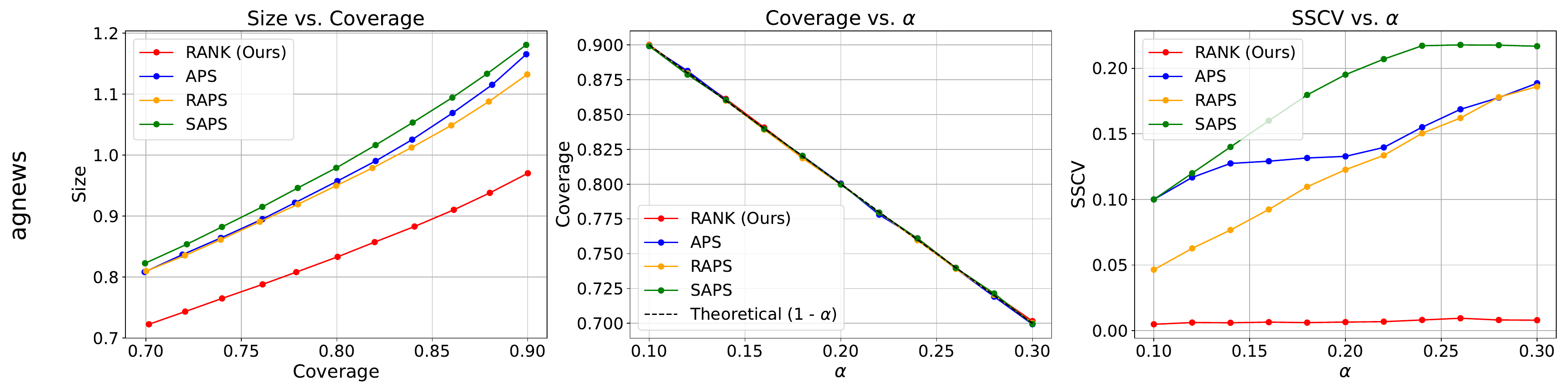}
\includegraphics[width=0.99\textwidth]{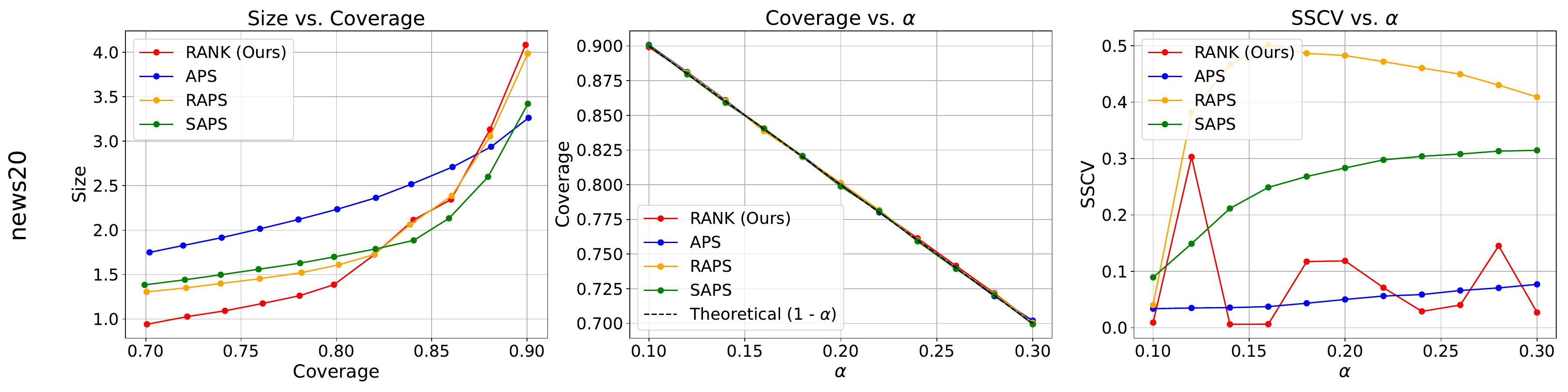}
\includegraphics[width=0.99\textwidth]{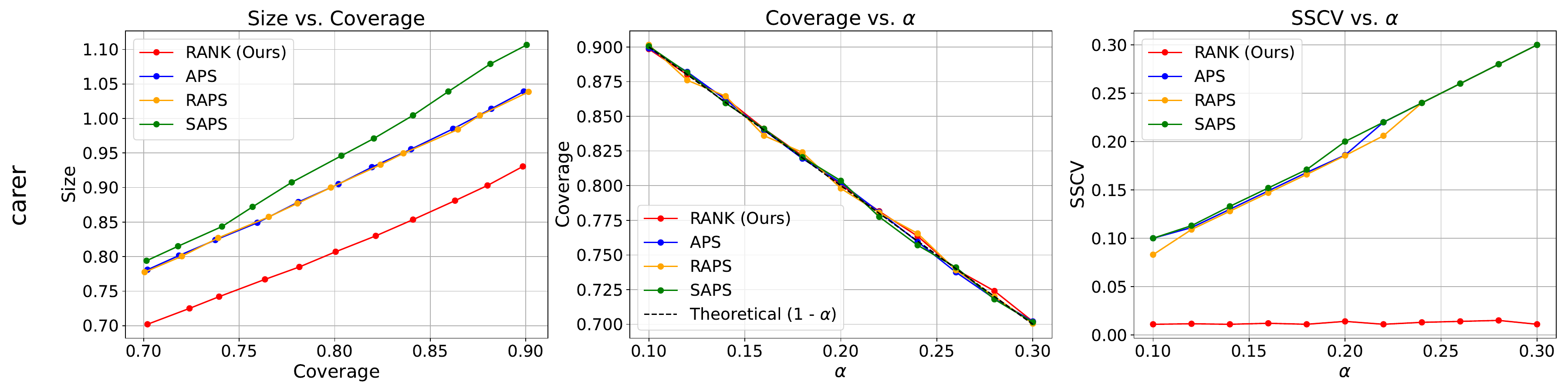}
\includegraphics[width=0.99\textwidth]{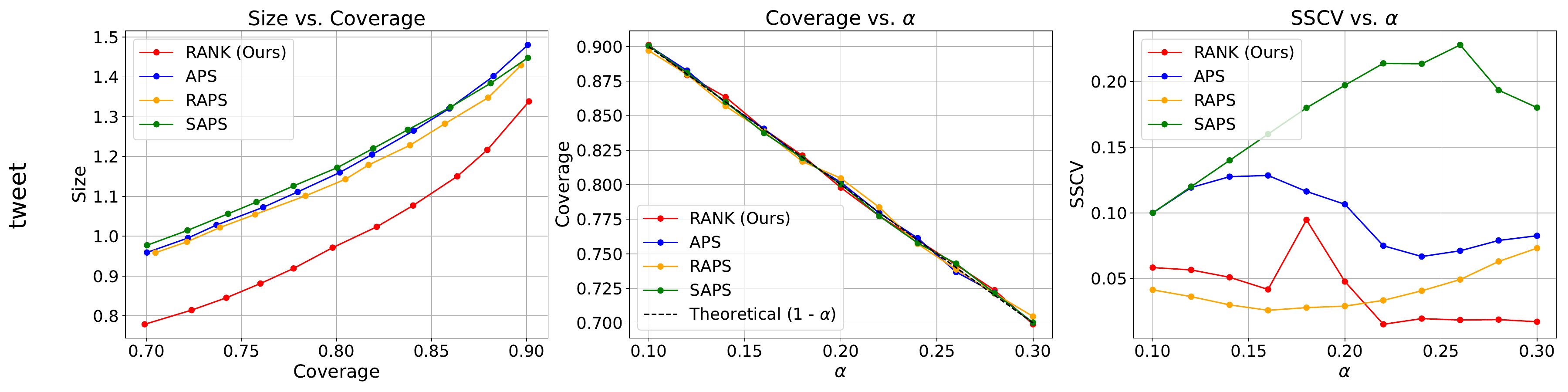}
\caption{Results for Topic Classification on AG News (agnews), 20 Newsgroups (news20); and Emotion Recognition on CARER (carer), TweetEval (tweet).}\label{fig: topic emotion}
\end{figure*}

\section{Discussion and Future Work}

Our proposed conformal prediction method for general machine learning classification tasks has demonstrated promising results in achieving high performance. The key contribution of our work is the proposal of a novel rank-based score function that is suitable for classification models that are not necessarily well-calibrated but predict the orders rather correctly. Moreover, we have shown that the expected size of the produced conformal prediction set can be analyzed theoretically based on the rank distribution of the classification model applied to a specific dataset. However, there are several areas where further research and improvements can be made.

\noindent
\textbf{Performance on Large Number of Classes.} Our approach's performance may deteriorate when faced with a very large number of classes, as seen in some datasets when $\alpha\le 0.15$. A potential future direction could involve combining our idea of minimizing the prediction set size with the concept of minimizing the tail probability, similar to APS-type methods, to improve results in these scenarios.

\noindent
\textbf{Multi-Label Classification.} Extending our approach to handle multi-label classification, where each instance can be assigned multiple labels simultaneously, is an important future research direction. While most single-label methods can be directly applied to multi-label scenarios, accounting for label dependence becomes challenging. Using statistical methods to estimate label co-occurrence could help capture complex label relationships and improve classification performance.

\section{Conclusion}

We proposed a novel conformal prediction method, RANK, for general classification tasks that effectively leverages confidence scores from classification models to construct prediction sets with desired coverage and minimal size. The key contribution of our work is the proposal of a rank-based score function that is suitable for classification models that are not necessarily well-calibrated but predict the orders rather correctly. We have also shown that the expected size of the produced conformal prediction set can be analyzed theoretically based on the rank distribution of the classification model applied to a specific dataset. Rigorous theoretical analysis and extensive experiments demonstrate our method's clear superiority over existing techniques, providing reliable uncertainty quantification for classifiers. Our significant contributions advance the field of uncertainty quantification in machine learning and enable the development of more reliable classification technologies.

\bibliographystyle{acm}

\end{document}